\definecolor{cvprblue}{rgb}{0.21,0.49,0.74}
\pgfplotsset{compat=1.17}
\crefname{section}{Sec.}{Secs.}
\Crefname{section}{Section}{Sections}
\Crefname{table}{Table}{Tables}
\crefname{table}{Tab.}{Tabs.}
\newenvironment{customlegend}[1][]{%
    \begingroup
    % inits/clears the lists (which might be populated from previous
    % axes):
    \pgfplots@init@cleared@structures
    \pgfplotsset{#1}%
}{%
    % draws the legend:
    \pgfplots@createlegend
    \endgroup
}%
\def\addlegendimage{\pgfplots@addlegendimage}
\newcommand{\xmark}{\ding{55}}%
\newcolumntype{H}{>{\setbox0=\hbox\bgroup}c<{\egroup}@{}}
\begin{document}

%%%%%%%%% TITLE - PLEASE UPDATE
\title{Fourier-basis functions to bridge augmentation gap: \\ Rethinking frequency augmentation in image classification}
\author{Puru Vaish$^*$ \quad Shunxin Wang$^*$ \quad Nicola Strisciuglio \\
University of Twente\\
% {\tt\small p.vaish@student.utwente.nl}; \quad 
{\tt\small \{p.vaish, s.wang-2, n.strisciuglio\}@utwente.nl}
}
\maketitle
\def\thefootnote{*}\footnotetext{Equal contribution}
%%%%%%%%% ABSTRACT
\begin{abstract}
Computer vision models normally witness degraded performance when deployed in real-world scenarios, due to unexpected changes in inputs that were not accounted for during training. Data augmentation is commonly used to address this issue, as it aims to increase data variety and reduce the distribution gap between training and test data. However, common visual augmentations might not guarantee extensive robustness of computer vision models. In this paper, we propose Auxiliary Fourier-basis Augmentation (AFA), a complementary technique targeting augmentation in the frequency domain and filling the robustness gap left by visual augmentations. 
We demonstrate the utility of augmentation via Fourier-basis additive noise in a straightforward and efficient adversarial setting.
Our results show that AFA benefits the robustness of models against common corruptions, OOD generalization, and consistency of performance of models against increasing perturbations, with negligible deficit to the standard performance of models. It can be seamlessly integrated with other augmentation techniques to further boost performance. 
\vspace*{-\baselineskip}
\end{abstract}

%%%%%%%%% BODY TEXT
\section{Introduction}\label{sec:introduction}
Computer vision models usually encounter performance degradation when deployed in real-world scenarios due to unexpected image variations~\cite{hendrycks2019benchmarking,greco2023,Kamann2021}. Improving the robustness of computer vision models to out-of-distribution (OOD) data is thus essential for their reliable practical use.
Among the methods addressing the robustness and generalization of computer vision models~\cite{chen2020simple,Xie_2020_CVPR,Zheng2016Apr,Strisciuglio2020,strisciuglio2022visual,faghri2023reinforce,yucel2023hybridaugment,Gao_2023_ICCV,Hao_2023_WACV}, data augmentation is mostly used for its easy-to-apply characteristics and effectiveness at reducing the distribution gap between training and test data~\cite{Wang2023May}.  
Popular augmentation techniques, such as AugMix~\cite{Hendrycks2019Dec}, AugMax~\cite{Wang2021Oct}, AutoAugment~\cite{cubuk2019autoaugment}, TrivialAugment~\cite{Muller2021Mar}, and PRIME~\cite{Modas2021Dec} have shown great improvements in corruption and perturbation robustness benchmarks and OOD datasets for generalisation, e.g. ImageNet-C, ImageNet-$\mathrm{\Bar{C}}$, ImageNet-3DCC, ImageNet-P, ImageNet-R and ImageNet-v2~\cite{hendrycks2019benchmarking,mintun2021on, kar20223d,hendrycks2021faces,recht2019imagenet}. These approaches mainly focus on adding visual variations to images through random or policy-based combinations~\cite{Hendrycks2019Dec,Muller2021Mar,cubuk2019autoaugment,Liu_2023_ICCV,Liu_2023_CVPR,pmlr-v202-hounie23a,ma2023learning,Marrie_2023_CVPR} of visual transformations aiming at increasing the diversity of training images (expanding on their domain, see visual augmentations in~\cref{fig:teaser}), and adversarial-based augmentations, which address the hardness of training samples but are computationally heavy (see~\cref{tab:comp_eff}, AugMax). 
\begin{figure}
    \centering
    \def\svgwidth{\linewidth}
    %% Creator: Inkscape 1.3 (0e150ed6c4, 2023-07-21), www.inkscape.org
%% PDF/EPS/PS + LaTeX output extension by Johan Engelen, 2010
%% Accompanies image file 'teaser.pdf' (pdf, eps, ps)
%%
%% To include the image in your LaTeX document, write
%%   \input{<filename>.pdf_tex}
%%  instead of
%%   \includegraphics{<filename>.pdf}
%% To scale the image, write
%%   \def\svgwidth{<desired width>}
%%   \input{<filename>.pdf_tex}
%%  instead of
%%   \includegraphics[width=<desired width>]{<filename>.pdf}
%%
%% Images with a different path to the parent latex file can
%% be accessed with the `import' package (which may need to be
%% installed) using
%%   \usepackage{import}
%% in the preamble, and then including the image with
%%   \import{<path to file>}{<filename>.pdf_tex}
%% Alternatively, one can specify
%%   \graphicspath{{<path to file>/}}
%% 
%% For more information, please see info/svg-inkscape on CTAN:
%%   http://tug.ctan.org/tex-archive/info/svg-inkscape
%%
\begingroup%
  \makeatletter%
  \providecommand\color[2][]{%
    \errmessage{(Inkscape) Color is used for the text in Inkscape, but the package 'color.sty' is not loaded}%
    \renewcommand\color[2][]{}%
  }%
  \providecommand\transparent[1]{%
    \errmessage{(Inkscape) Transparency is used (non-zero) for the text in Inkscape, but the package 'transparent.sty' is not loaded}%
    \renewcommand\transparent[1]{}%
  }%
  \providecommand\rotatebox[2]{#2}%
  \newcommand*\fsize{\dimexpr\f@size pt\relax}%
  \newcommand*\lineheight[1]{\fontsize{\fsize}{#1\fsize}\selectfont}%
  \ifx\svgwidth\undefined%
    \setlength{\unitlength}{469.91271973bp}%
    \ifx\svgscale\undefined%
      \relax%
    \else%
      \setlength{\unitlength}{\unitlength * \real{\svgscale}}%
    \fi%
  \else%
    \setlength{\unitlength}{\svgwidth}%
  \fi%
  \global\let\svgwidth\undefined%
  \global\let\svgscale\undefined%
  \makeatother%
  \begin{picture}(1,0.65063646)%
    \lineheight{1}%
    \setlength\tabcolsep{0pt}%
    \put(0,0){\includegraphics[width=\unitlength,page=1]{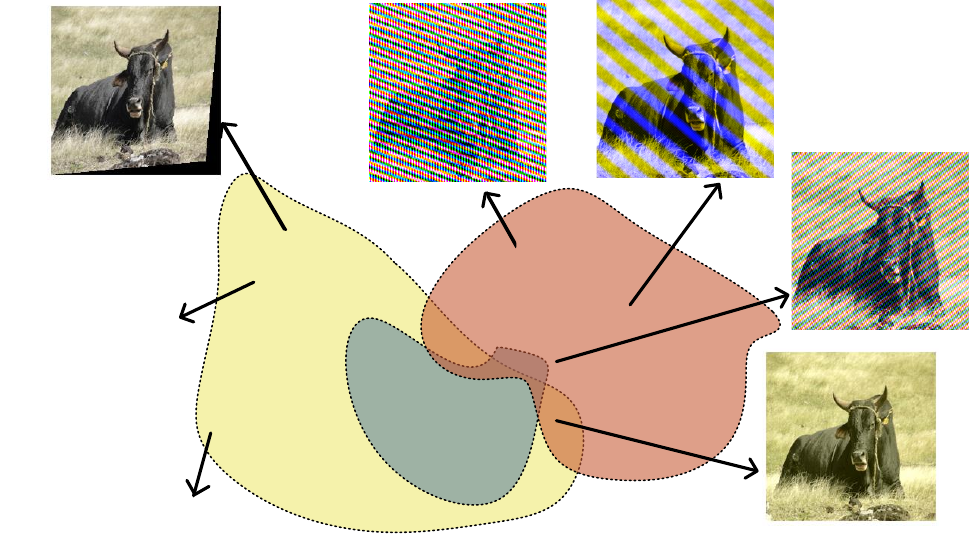}}%
    \put(0.525,0.22731628){\color[rgb]{0.10196078,0.10196078,0.10196078}\makebox(0,0)[lt]{\lineheight{1.25}\smash{\begin{tabular}[t]{l}$X^{\text{AFA}}$\end{tabular}}}}%
    \put(0.44690027,0.1097097){\color[rgb]{0.10196078,0.10196078,0.10196078}\makebox(0,0)[lt]{\lineheight{1.25}\smash{\begin{tabular}[t]{l}$X$\end{tabular}}}}%
    \put(0.21,0.155){\color[rgb]{0.10196078,0.10196078,0.10196078}\makebox(0,0)[lt]{\lineheight{1.25}\smash{\begin{tabular}[t]{c} $X^{\text{Vis. Aug.}}$ \end{tabular}}}}%
    \put(0,0){\includegraphics[width=\unitlength,page=2]{section/figs/teaser/teaser_ink.pdf}}%
  \end{picture}%
\endgroup%

    \vspace{-1.5em}
    \caption{Frequency augmentation with Fourier-basis functions is complementary to common visual augmentations. They appear \emph{unnatural} and can be used as adversarial examples.}
    \vspace*{-\baselineskip}
    \label{fig:teaser}
\end{figure}%
However, even if trained with visual augmentations, models are still sensitive to image variations not included in the training~\cite{liu2023outofdistribution} and frequency perturbations~\cite{Yin2019Jun}. This occurs due to the pre-defined frequency characteristics of visual transformations, which cannot ensure the complete robustness of models against noise with different frequency characteristics from those encountered during training. Attackers may  exploit this weakness and degrade model performance in operational settings~\cite{9412611}.
This raises a question: \textit{Is there a complementary augmentation technique that can bridge the gap left by visual augmentations?}

Common visual augmentations impact different frequency components of images simultaneously, which are difficult to explicitly control, and might not encompass all possible frequency variations present in unseen corruptions or variantions happening in real-world scenarios~\cite{Saikia_2021_ICCV}.
We thus rethink image augmentation in the frequency domain, and complement visual augmentation strategies with explicit use of Fourier basis functions in an adversarial setting. There has been exploration into frequency-based augmentations to discover capabilities beyond what visual augmentations can achieve. 
~\cite{chen2021amplitudephase,XU2023109474,10190316} swap or mix partial amplitude spectrum between images, aiming to induce more phase-reliance for classification.
\cite{wang2023dfmx} augments images with shortcut features to reduce their specificity for classification. 
AugSVF~\cite{Soklaski2022Feb} introduces frequency noise within the AugMix framework and ~\cite{101007,liu2023improving}  adversarially perturb the frequency components of images. 
These augmentations are computationally heavy, due to the complicated augmentation framework~\cite{Soklaski2022Feb}, computation of multiple Fourier transforms for training images and their augmented versions~\cite{chen2021amplitudephase,XU2023109474,10190316}, identification of learned frequency shortcuts~\cite{wang2023dfmx}, or adversarial training~\cite{101007,liu2023improving}.

In this work, we propose Auxiliary Fourier-basis Augmentation (AFA). We use additive noise based on Fourier-basis functions to augment the frequency spectrum in a more efficient way than other methods that apply frequency manipulations~\cite{chen2021amplitudephase,Soklaski2022Feb,wang2023dfmx}. The effect of additive Fourier-basis functions on image appearance is complementary to those of other augmentations (see~\cref{fig:teaser}). These images can be interpreted as samples representing an adversarial distribution, distinct from those augmented by common visual transformations.
We thus expand upon the conventional idea of adversarial augmentation, moving beyond the generation of imperceptible noise through gradient back-propagation.
We employ a training architecture and strategy with an auxiliary component to address the adversarial distribution, and a main component for the original distribution, similarly to AugMax~\cite{Wang2021Oct}.
However, the adversarial distribution that we construct using additive Fourier-basis is much less computationally expensive than that of AugMax (and other visual augmentation methods - see~\cref{tab:comp_eff}). It contributes to comparable or higher generalization results, while allowing for the training of larger models on larger datasets (e.g. ImageNet).
Our contributions are:
\begin{itemize}
    \item We propose a straightforward and computationally efficient augmentation technique called AFA. We show that it enhances robustness of models to common image corruptions, improves OOD generalization  and consistency of prediction w.r.t. perturbations;
    \item  We expand the augmentation space, complementary to that of visual augmentations, by exploiting amplitude- and phase-adjustable frequency noise, and use it in an adversarial setting. Our method reduces the augmentation gap of common visual augmentations.
\end{itemize}

\setlength{\tabcolsep}{1.5pt} % Default value: 6pt
\renewcommand{\arraystretch}{0.75} % Default value: 1
\begin{table}
\scriptsize
    \centering
    \begin{tabular}{lHllllllll}
    \toprule
        & APR-SP & \makecell{AFA (ours) \\ w/o aux.} & \makecell{AFA \\ (ours)} & AugMix$^\dagger$ & \makecell{AFA \\ w/ AugMix} & PRIME & \makecell{AFA \\ w/ PRIME} & \makecell{AugMax$^\dagger$}\\ \toprule
        FLOPs & $\times 1$ & $\times 1$ & $\times 2$ & $\times 3$ & $\times 2$ & $\times 1$ & $\times 2$ & $\times 8$\\
        Memory & $\times 1.02$ & $\times 1.02$ & $\times 1.62$ & $\times 2.66$ & $\times 1.83$ & $\times 2.50$ & $\times 3.06$ & $\times 2.35$\\
        \bottomrule
    \end{tabular}
    \vspace{-2mm}
    \caption{AFA adds minimal computational burden to existing methods and is more efficient compared to other adversarial methods. It requires only $\times 1.62$ memory and just $\times 2$ the FLOPs of standard augmentation~\cite{he2015deep} training whereas AugMax uses $\times 2.35$ the memory and $\times 8$ the FLOPs when using $5$ PGD steps. Methods with $^\dagger$ denote the use of loss with JSD.}
    \label{tab:comp_eff}
    \vspace*{-2\baselineskip}
\end{table}
\setlength{\tabcolsep}{6pt} % Default value: 6pt
\renewcommand{\arraystretch}{1} % Default value: 1

\section{Related works}\label{sec:relatedworks}
Data augmentation includes a set of techniques to increase data variety, thus reducing the distribution gap between training and test data. Generalization and robustness performance of models normally benefits from the use of data augmentation for training~\cite{Wang2023May} or at test-time~\cite{NEURIPS2020_2ba59664}. 

\noindent \textbf{Image-based augmentations.} Common image augmentation techniques include transformations, e.g. cropping, flipping, rotation, among others~\cite{Wang2023May}. Applying the transformations with fixed configuration lacks flexibility when the models encounter more variations in the inputs at testing time. Thus, algorithms were designed to combine transformations randomly, e.g. AugMix~\cite{Hendrycks2019Dec}, RandAug~\cite{NEURIPS2020_d85b63ef}, TrivialAugment~\cite{Muller2021Mar}, MixUp~\cite{zhang2018mixup}, and CutMix~\cite{yun2019cutmix}. However, random combinations  might not be optimal. In~\cite{cubuk2019autoaugment}, AutoAugment was proposed, based on using reinforcement learning  to find the best policy on how to combine basic transformations for augmentation. AugMax~\cite{Wang2021Oct} instead combines transformations adversarially, aiming at complementing augmentations based on diversity with others that favour hardness of training data. PRIME~\cite{Modas2021Dec} samples transformations with maximum-entropy distributions. ~\cite{Suzuki_2022_CVPR} augments images based on knowledge distilled by a teacher model.  However, these approaches address variations limited by visually-plausible transformations only.

\noindent \textbf{Frequency-based augmentations.} In~\cite{Yin2019Jun}, it was discovered that models trained with visual transformations might be vulnerable to noise impacting certain parts of the frequency spectrum (e.g. high-frequency components), demonstrating that visual augmentations do not completely guarantee robustness. Complementary augmentation techniques are thus required to fill the augmentation gap left by visual augmentations. The straightforward approach is augmentation in the frequency domain.
For example,~\cite{chen2021amplitudephase} mixes the amplitude spectrum of images to reduce reliance on the amplitude part of the spectrum and induce phase-reliance for classification. ~\cite{XU2023109474,10190316} swap or mix the amplitude spectrum of images. ~\cite{wang2023dfmx} augments images with shortcut features to reduce their specificity for classification, mitigating frequency shortcut learning. 
~\cite{Soklaski2022Feb} introduces frequency noise in the AugMix framework.
~\cite{101007978,liu2023improving} adversarially perturb images in the frequency domain.
While these techniques address what visual augmentations may overlook, they also have limitations.
Most frequency augmentation methods  are based on manipulation of the frequency components of images. They usually have high computational requirements to identify frequency shortcuts~\cite{wang2023dfmx} (f.i. using~\cite{Wang2023ICCV,wang2022frequency}), implement adversarial training setup~\cite{liu2023improving} or calculate multiple Fourier transforms of original and augmented images~\cite{wang2023dfmx,chen2021amplitudephase,XU2023109474,10190316}.
 
We instead propose to use Fourier-basis functions as additive noise in the frequency domain. Our augmentation technique requires only one extra step during training rather than multiple pre-processing and expensive computations during training time as in other methods~\cite{wang2023dfmx,chen2021amplitudephase,XU2023109474,10190316}, and works to complement image-based augmentations. Furthermore, we simplify the adversarial training framework of AugMax~\cite{Wang2021Oct}, not requiring an optimization process to maximize the hardness of adversarial augmentation, and achieving comparable or higher robustness. This allows the use of adversarial augmentations at larger-scale.
We account for the induced distribution shifts in the frequency domain via an auxiliary component. The benefit of AFA is complementary to visual augmentations, and we can incorporate them seamlessly to further boost model robustness.

\section{Preliminary: Fourier-basis functions}\label{par:fb_prel}
We utilize Fourier-basis functions in our augmentation strategy as an additive perturbation to the images. They are sinusoidal wave functions used as basic components of the Fourier transform to represent signals and images.
A real Fourier basis function has two parameters, namely a frequency $f$ and direction $\omega$, and is denoted as:
\vspace{-0.5\baselineskip}
\begin{equation}\label{eqn:general_wave}
    A_{f, \omega}(u, v) = R\sin(2\pi f(u\cos(\omega) + v\sin(\omega) - \pi / 4)),
\vspace{-0.5\baselineskip}
\end{equation}
\noindent where \(A_{f, \omega}(u, v)\) represents the amplitude of the wave at position \((u, v)\). The function involves the sine of a 2D spatial frequency \(2\pi f\) to produce a planar wave with a specific frequency \(f\), and angle \(\omega\) that indicates the direction of propagation. $R$ is chosen such that the planar wave has unit $l_2$-norm.
A particular Fourier basis function, characterized by specific frequency ($f$) and direction ($\omega$), can be associated with a Dirac delta function in the spectral domain. Therefore, when employed in an additive manner, as in our augmentation strategy, this Fourier-basis function facilitates the targeted modification of particular frequency components of images.
Examples of Fourier-basis waves superimposed on images are shown in~\cref{fig:fb_example}.

\begin{figure}[!h]
    \centering
    \includegraphics[width=\linewidth]{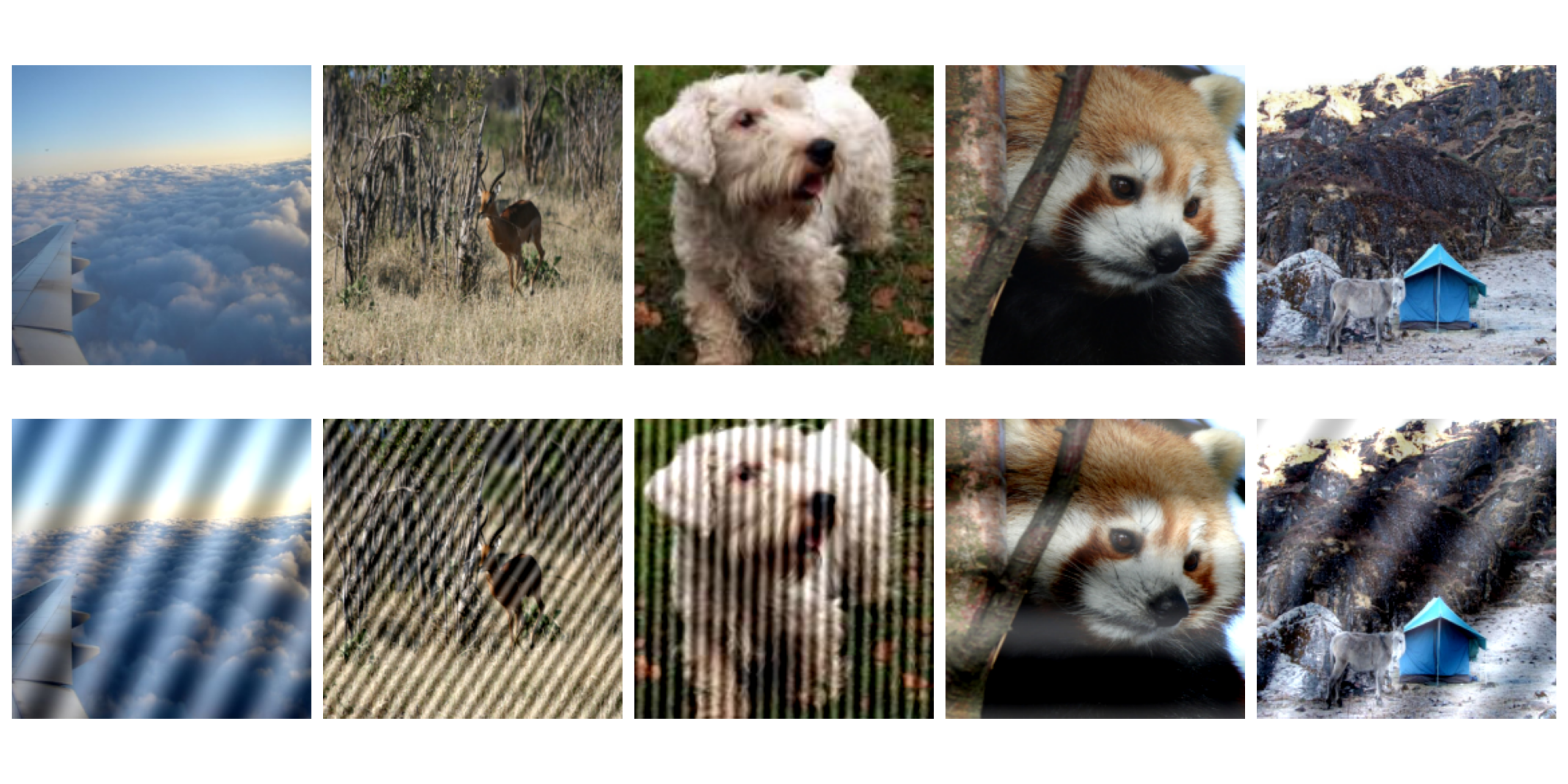}
    \vspace{-2.5em}
    \caption{Example of Fourier-basis functions  added to natural images. They appear as \emph{gratings} that obscure spatial information.}
    \vspace*{-\baselineskip}
    \label{fig:fb_example}
\end{figure}

\section{Auxiliary Fourier-basis Augmentation}

The Auxiliary Fourier-basis Augmentation (AFA)  that we propose is based on two lines of augmentations, one considered in-distribution (using visual augmentations) and another considered out-of-distribution or adversarial (using frequency-based noise) as shown in~\cref{fig:schema}. We generate the adversarial augmented images by sampling a Fourier-basis and a strength parameter per colour channel, and adding them to the original images. Visually augmented and adversarially augmented training images are then processed using a main component and an auxiliary component, respectively. Joint optimisation of two cross-entropy functions encourages robust and consistent classification, as it promotes correctness under adversarially augmented images. Details of the different parts of the method are reported below.

\begin{figure*}[!htb]
    \centering
    \def\svgwidth{\linewidth}
    \input{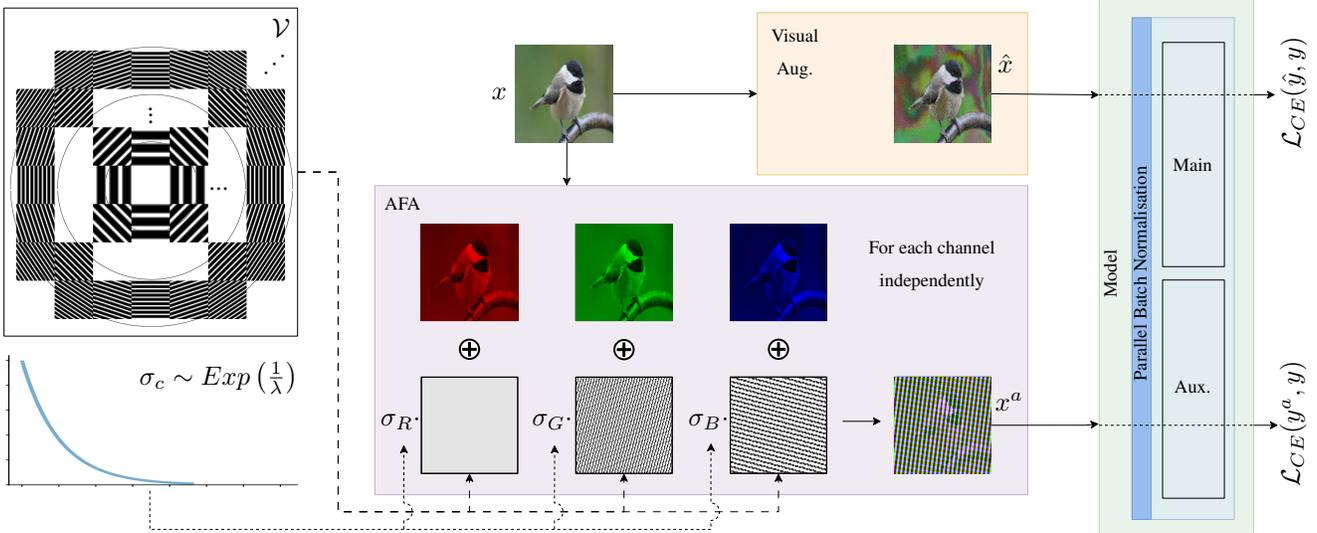}
    \caption{Schema of the AFA augmentation pipeline. The image $x$ is augmented using AFA, which adds a planar wave per channel $c$ of the image at a strength value $\sigma_c$ sampled from an exponential distribution (eq.\ref{eqn:afa}). The AFA augmented image $x^a$ is used for training, processed through the auxiliary component of the parallel batch normalisation layer (for models that use batch normalization  to track batch statistics, e.g. ResNet).
    Other visual augmentations are applied in parallel, and used for training via the main component of the normalization layer. Finally, we train via optimizing two cross-entropy losses, one for the main and the other for the auxiliary component.}
    \vspace{-\baselineskip}
    \label{fig:schema}
\end{figure*}

\noindent \textbf{Generation of adversarial augmented images. }
Randomly sampling augmentations and applying them to images with random strengths  was shown to be sufficient to outperform more complex strategies~\cite{Muller2021Mar}. 

We follow this design principle in our method to generate adversarial augmented images with Fourier basis functions, which allows us to avoid optimization steps to determine the worst-case combination of augmentations as in AugMax~\cite{Wang2021Oct}. We produce adversarial augmented images by adding a different Fourier basis function $A_{f,\omega}$ per channel of the original RGB image. We generate the Fourier basis functions by sampling $f$ and $\omega$ from uniform distributions as $f\sim \mathcal{U}_{[1,\text{M}]}$ and $\omega\sim \mathcal{U}_{[0,\pi]}$, where M is the image size. The sampling space of all Fourier-basis is denoted as $\mathcal{V}$. We add the generated Fourier basis functions per channel $c$ with a weight factor sampled from an exponential distribution ${\sigma_c~\sim~\text{Exp}(1 / \lambda)}$, with $c \in \{\text{R}, \text{G}, \text{B}\}$. 
The selection of the exponential distribution for sampling augmentation magnitude is motivated by the concept of event rate, where perturbations with larger magnitudes become progressively less likely, albeit still possible. This is controlled by adjusting $\lambda$, ensuring a balance between maintaining diversity in sampled values while minimizing the occurrence of extremely large augmentation perturbations. In~\cref{par:strength_abl}, we show how the parameter $\lambda$ affects the augmentation results.

The proposed augmentation process results in a 3-channel  image $x^a = [x^a_R, x^a_G, x^a_B]$, where:
\begin{align}\label{eqn:afa}
    x^a_c = \text{Clamp}_{[0, 1]}(x_c + \sigma_c A_{f_c, \omega_c}), && c \in \{\text{R}, \text{G}, \text{B}\}.
\end{align}
An example of image $x^a$ augmented with additive Fourier-basis functions is shown in our method schema in~\cref{fig:schema}.
The augmentation also results in the augmentation at exactly the particular sampled frequency and phase in the Fourier domain (Appendix~\ref{app:proof}) therefore being effective at perturbing the samples in Fourier domain with precision.

\noindent \textbf{Auxiliary component for distribution shifts. }
As shown in~\cref{fig:fb_example,fig:schema}, the Fourier-basis augmentations result in images with an unnatural appearance due to substantial frequency perturbations.
The presence of planar waves across the augmented images determines the \emph{unnaturalness} of image appearance, which can  be seen as adversarial attacks on the images (experimentally verified in Appendix~\ref{app:pca_vis}). These augmentations disrupt the learned mean and variance in batch normalization layers, which are inconsistent with the distribution shifts induced by our augmentation and lead to inconsistent activations (Appendix~\ref{app:bn_diff}). This results in a negative impact on model convergence and generalization abilities. 

We address these issues by deploying architectural components in the training, capable of handling distribution shifts explicitly by tracking statistics and adjusting the loss function accordingly. Namely, we incorporate auxiliary components into the model, such as Parallel Batch Normalization layers and an additional cross-entropy term in the loss function to specifically account for these adversarial augmented images.
These modifications to the model architecture and training enhance  performance, particularly in the presence of distribution shifts, contributing to better generalization, robustness to common corruptions and consistency to time-dependent increasing perturbations. The introduction of parallel batch normalization layers is motivated by the need to account for distribution shifts induced by adversarial (Fourier-basis) augmentations, as observed in~\cite{Wang2021Oct}. With the parallel batch normalisation, the affine parameters and statistics of main and auxiliary distributions are recorded separately. This allows independent learning of distribution of the visually and adversarially augmented images. Without these additional normalization layers, the model training assumes a single-modal sample distribution, limiting its ability to differentiate between the main and the adversarial distribution, thus negatively affecting overall performance. In~\cref{par:aux_abl}, we show the result of not employing the auxiliary components.

It is worth noting that for models that do not employ batch normalization layers (e.g. CCT that uses layer normalization and does not track statistics), the parallel normalization layers are not needed. However, the extra term in the loss function (see next paragraph) to generate consistent predictions across distribution shifts serves as a regularization mechanism (verified in Appendix~\ref{app:reg_effect}).

\noindent \textbf{Loss function. }
We work in the supervised learning setting with a training dataset $\mathcal{D}$ consisting of clean images $x$ with labels $y$. We train the model in the main architecture stream (see~\cref{fig:schema}) using a cross-entropy loss $\mathcal{L}_{\text{CE}}(\hat{y}, y)$, where $y$ is the ground-truth label and $\hat{y}$ is the predicted label for images augmented with a given visual augmentation strategy (e.g. standard, PRIME, etc.). Under the non-auxiliary setting, models thus optimise the standard cross entropy loss.

In the auxiliary setting, we add an extra cross-entropy loss term $\mathcal{L}_{\text{CE}}(y^{a}, y)$, which optimise the model to predict the correct label on adversarial augmented images whose predicted label is denoted by $y^a$, contributing to robustness of the model w.r.t. aggressive distribution shifts. 
We refer to the combined loss function $\mathcal{L}_{\text{ACE}}$, taking the average of the two cross-entropy terms, as the Auxiliary Cross Entropy (ACE) Loss:
\vspace{-0.5\baselineskip}
\begin{equation}\label{eqn:adv}
    \mathcal{L}_{\text{ACE}}(\hat{y}, y^a, y) =  \frac{1}{2} \left[ \mathcal{L}_{\text{CE}}(\hat{y}, y) + \mathcal{L}_{\text{CE}}(y^{a}, y) \right].
\vspace{-0.5\baselineskip}
\end{equation}

It contributes to achieve comparable performance, with lower training time and complexity, than using the Jensen-Shannon Divergence (JSD) loss~\cite{Hendrycks2019Dec, Wang2021Oct}. Our motivation to not employ the JSD loss is the reduced training time due to less computational complexity. In our experiments, for comparison purposes, we also use the JSD loss in the auxiliary setting, where training batches are augmented using AFA and go through auxiliary components. We report results in~\cref{sec:ablation} (\cref{fig:with_without_jsd}).

\section{Experiments and results}
We compare AFA with other popular augmentation techniques, evaluating robustness to common corruptions, generalization abilities and consistency to time-dependent increasing perturbations, on benchmark datasets.

\subsection{Experiment setup}
\noindent \textbf{Datasets.} We trained models on the CIFAR-10 (C10)~\cite{cifar10}, CIFAR-100 (C100)~\cite{cifar100}, TinyImageNet (TIN)~\cite{tin} and ImageNet (IN)~\cite{5206848} datasets and evaluate them on the corresponding robustness benchmark datasets, namely C10-C, C100-C, TIN-C, IN-C~\cite{hendrycks2019benchmarking}, IN-$\Bar{\text{C}}$~\cite{mintun2021on}, and IN-3DCC~\cite{kar20223d}. For ImageNet-trained models,  we further evaluate their generalisation performance on the IN-v2~\cite{recht2019imagenet} and IN-R datasets~\cite{hendrycks2021faces}, and consistency of performance on time-dependent increasing perturbations on the IN-P dataset~\cite{hendrycks2019benchmarking}.

\noindent \textbf{Architectures and training details. }
We train ResNet~\cite{he2015deep} and Compact Convolution Transformers (CCTs)~\cite{hassani2022escaping}. We train ResNet-18 and CCT-7/3x1 (32 resolution) on C-10, C-100, and only ResNet-18 on TIN. In the case of ImageNet, we train ResNet-18, ResNet-50 and CCT-14/7x2 (224 resolution). Under auxiliary setting, we use the DuBIN variant of ResNet~\cite{Wang2021Oct}. We always use standard transforms~\cite{he2015deep} before other augmentations. 
Implementation details and hyperparameter configurations are in Appendix~\ref{app:impl_details}. We release code and  models\footnote{Code and models: \href{https://github.com/nis-research/afa-augment}{https://github.com/nis-research/afa-augment}}.

\noindent \textbf{Evaluation metrics.}
We evaluate the classification accuracy on the original test set, which we refer to as standard accuracy (SA), and the average classification accuracy over all corruptions in the robustness benchmarks as robustness accuracy (RA). This provides direct comparison between model performance on original and corruption benchmark datasets. We also compute the mean corruption error (mCE)~\cite{hendrycks2019benchmarking} for TIN and IN (for CIFAR there are no baselines advised) to evaluate the normalized robustness of models against image corruptions, the mean flip rate (mFR) and the mean top-5 distance (mT5D) to evaluate the consistency performance of models against increasing perturbations. For the evaluation of generalization performance, we compute the accuracy on the ImageNet-R and ImageNet-v2 test sets (note that ImageNet-v2 has 3 test sets, and we report the average accuracy on them). More details about the metrics are in Appendix~\ref{app:expl_metrics}. 

\subsection{Results}
\begin{table}[!b]
    \centering
    \footnotesize
    \begin{tabular}{ccccccc}
         \bfseries - & \bfseries Main & \bfseries Auxiliary & \bfseries SA$\uparrow$ & \bfseries RA$\uparrow$ & \bfseries mCE$\downarrow$ \\ \toprule
         \multirow{4}{*}{\rotatebox{90}{C10}} 
             & AugMix$^\dagger$   & \xmark & 95.47 & 86.48          & -          \\
             & AugMix$^\dagger$   & AugMax & 95.76 & \textbf{90.36} & - \\
             & AugMix$^\dagger$   & AFA    & 95.24 & 89.96          & -          \\ 
             & AugMix   & AFA    & 95.44 & 89.81                    & - \\
             \cmidrule{2-6}
         \multirow{4}{*}{\rotatebox{90}{C100}} 
             & AugMix$^\dagger$   & \xmark & 78.72 & 61.61          & -          \\
             & AugMix$^\dagger$   & AugMax & 78.69 & 65.75          & -          \\
             & AugMix$^\dagger$   & AFA   & 78.99 & 65.96          & - \\ 
             & AugMix   & AFA    & 77.80 & \textbf{66.69} & - \\
             \cmidrule{2-6}
         \multirow{4}{*}{\rotatebox{90}{TIN}} 
             & AugMix$^\dagger$   & \xmark & 64.65 & 36.30 & 83.90          \\
             & AugMix$^\dagger$   & AugMax & 62.21 & \textbf{38.67} & \textbf{80.72} \\
             & AugMix$^\dagger$   & AFA   & 64.34 & 38.53 & \textbf{80.79} \\ 
             & AugMix   & AFA    & 62.51 & \textbf{38.67} & 80.83 \\
             \cmidrule{2-6}
         \multirow{4}{*}{\rotatebox{90}{IN}} 
             & AugMix$^\dagger$   & \xmark & 65.2 & 31.5 & 87.1          \\
             & AugMix$^\dagger$   & AugMax & 66.5 & 36.5 & 80.6          \\
             & AugMix$^\dagger$   & AFA   & 65.0 & 36.8 & 80.4 \\ 
             & AugMix             & AFA   &  68.1 & \textbf{41.1} & \textbf{75.0}          \\
             \bottomrule
    \end{tabular}
    \caption{Comparison of AFA and AugMax (with AugMix for visual augmentation~\cite{Wang2021Oct}), with a ResNet18 backbone. The mark $^\dagger$ indicates the use of the JSD loss, otherwise the ACE loss is used.}
    \label{tab:tfba_augmax_comp}
\end{table}

\definecolor{darkgreen}{rgb}{0, 0.5, 0}
\newcommand{\gn}[1]{\textcolor{darkgreen}{#1}}
\newcommand{\rn}[1]{\textcolor{red}{#1}}

\begin{table*}[!t]
    \centering
    \scriptsize
    \begin{tabular}{cccccccccccHccl}
        & & & & \multicolumn{6}{c}{Robustness} & \multicolumn{3}{c}{Generalisation} & \multicolumn{2}{c}{Consistency} \\ \cmidrule(lr){5-10}\cmidrule(lr){11-13}\cmidrule(lr){14-15}
        & & & & \multicolumn{2}{c}{\bfseries IN-C} & \multicolumn{2}{c}{\bfseries IN-$\bar{\textbf{C}}$}& \multicolumn{2}{c}{\bfseries IN-3DCC} & \multicolumn{2}{c}{\bfseries IN-R} & \multicolumn{1}{c}{\bfseries IN-v2} & \multicolumn{2}{c}{\bfseries IN-P} \\
        & \bfseries Main & \bfseries Aux & \bfseries \textcolor{gray}{SA ($\uparrow$)} & \bfseries \textcolor{gray}{RA ($\uparrow$)} & \bfseries \textcolor{gray}{mCE ($\downarrow$)} & \bfseries \textcolor{gray}{RA ($\uparrow$)} & \bfseries \textcolor{gray}{mCE ($\downarrow$)} & \bfseries \textcolor{gray}{RA ($\uparrow$)} & \bfseries \textcolor{gray}{mCE ($\downarrow$)} & \bfseries \textcolor{gray}{Acc. ($\uparrow$)} 
        & \bfseries \textcolor{gray}{RMS ($\downarrow$)} 
        & \bfseries \textcolor{gray}{Avg. Acc. ($\uparrow$)} & \bfseries \textcolor{gray}{mFP ($\downarrow$)} & \bfseries \textcolor{gray}{mT5D ($\downarrow$)} \\ \toprule
        \multirow{11}{*}{\rotatebox{90}{ResNet18}}
        & -      & \xmark & \textbf{68.9} & 32.9 & 84.7 & 34.8 & 87.0 & 34.9 & 84.4 & 33.1 & 20.7 & \textbf{64.3} & 72.8 & 87.0 \\
        & -      & AFA   & \rn{68.2} & \gn{35.9} & \gn{81.0} & \gn{41.7} & \gn{78.3} & \gn{37.1} & \gn{81.7} & \rn{32.8} & 16.8 & \rn{63.7} & \gn{64.2} & \gn{76.8} \\
        \cmidrule{2-15}
        & AugMix$^\dagger$ & \xmark & 65.2 & 31.5 & 87.1 & 34.6 & 87.3 & 32.1 & 88.3 & 28.2 & 17.4 & 59.5 & 80.2 & 86.2 \\
        & AugMix$^\dagger$ & AFA   & \rn{65.0} & \gn{36.8} & \gn{80.4} & \gn{40.9} & \gn{79.3} & \gn{36.0} & \gn{83.2} & \gn{30.6} & 18.3 & \gn{60.9} & \gn{60.1} & \gn{68.5} \\
        & AugMix           & AFA   & \gn{68.1} & \gn{41.1} & \gn{75.0} & \gn{45.2} & \gn{73.3} & \gn{38.9} & \gn{79.4} & \gn{35.2} & \gn{14.6} & \gn{63.2} & \gn{68.5} & \gn{81.7} \\
        \cmidrule{2-15}
        & PRIME  & \xmark & 66.0 & 43.6 & 72.0 & 42.0 & 78.1 & 42.4 & 75.2 & 36.9 & 14.1 & 61.4 & 54.7 & 65.3 \\
        & PRIME  & AFA   & \gn{67.2} & \gn{\textbf{47.2}} & \gn{\textbf{67.8}} & \gn{\textbf{47.3}} & \gn{\textbf{71.1}} & \gn{\textbf{43.8}} & \gn{\textbf{73.5}} & \gn{\textbf{37.8}} & 13.1 & \gn{63.0} & \gn{\textbf{52.3}} & \gn{\textbf{63.7}} \\
        \cmidrule{2-15}
        & TA$^+$    & \xmark & \textbf{68.9} & 36.9 & 80.1 & 35.9 & 85.6 & 38.6 & 79.7 & 32.6 & 16.8 & 63.7 & 68.1 & 81.4 \\
        & TA$^+$    & AFA    & \rn{67.8} & \gn{41.4} & \gn{74.7} & \gn{42.9} & \gn{76.7} & \gn{41.1} & \gn{76.5} & \gn{35.4} & 14.2 & \rn{62.7} & \gn{59.9} & \gn{72.3} \\
        \midrule
        \multirow{11}{*}{\rotatebox{90}{ResNet50}}
        & -      & \xmark & 75.6 & 39.2 & 76.7 & 39.9 & 79.4 & 41.2 & 76.1 & 36.2 & 19.6 & 70.8 & 58.0 & 78.4 \\
        & -      & AFA   & \gn{76.5} & \gn{46.2} & \gn{68.0} & \gn{47.6} & \gn{69.4} & \gn{46.2} & \gn{69.8} & \gn{38.1} & 16.3 & \gn{72.0} & \gn{48.0} & \gn{67.2} \\ \cmidrule{2-15}
        & APR-SP & \xmark & 71.9 & 42.9 & 72.7 & 45.9 & 72.5 & 39.8 & 78.4 & 34.9 & 17.2 & 67.2 & 60.2 & 75.4 \\
        & APR-SP  & AFA   & \gn{74.4} & \gn{47.6} & \gn{66.7} & \gn{51.4} & \gn{64.9} & \gn{42.6} & \gn{74.6} & \gn{38.7} & 14.2 & \gn{69.3} & \gn{54.9} & \gn{72.6} \\
        \cmidrule{2-15}
        & AugMix$^{\dagger}$ & \xmark & 74.7 & 43.4 & 72.0 & 44.6 & 73.3 & 41.9 & 75.5 & 33.0 & 16.8 & 70.0 & 60.9 & 72.5 \\
        & AugMix$^\dagger$     & AFA   & \gn{75.6} & \gn{50.6} & \gn{62.9} & \gn{51.8} & \gn{64.0} & \gn{47.6} & \gn{68.3} & \gn{36.3} & 14.4 & \gn{71.2} & \gn{44.5} & \gn{56.1} \\
        & AugMix           & AFA   & \gn{76.6} & \gn{49.1} & \gn{64.7} & \gn{52.5} & \gn{62.9} & \gn{46.3} & \gn{69.6} & \gn{41.0} & \gn{14.1} & \gn{71.8} & \gn{52.2} & \gn{72.2} \\ 
        \cmidrule{2-15}
        & PRIME  & \xmark & 72.1 & 49.2 & 64.9 & 46.4 & 71.5 & 47.2 & 68.8 & 38.5 & 11.6 & 67.8 & 45.4 & 58.1 \\
        & PRIME  & AFA   & \gn{74.5} & \gn{\textbf{53.9}} & \gn{\textbf{59.2}} & \gn{\textbf{54.2}} & \gn{\textbf{61.3}} & \gn{\textbf{50.2}} & \gn{\textbf{65.0}} & \gn{\textbf{40.9}} & 11.6 & \gn{69.8} & \gn{\textbf{40.4}} & \gn{\textbf{54.8}} \\
        \cmidrule{2-15}
        & TA$^+$    & \xmark & 75.9 & 43.4 & 71.7 & 41.8 & 77.1 & 44.7 & 71.6 & 37.1 & 16.5 & 70.3 & 51.9 & 70.4 \\
        & TA$^+$    & AFA    & \gn{76.6} & \gn{50.3} & \gn{63.1} & \gn{49.7} & \gn{66.7} & \gn{49.6} & \gn{65.4} & \gn{40.0} & 14.3 & \gn{\textbf{72.2}} & \gn{45.1} & \gn{64.5} \\
        \midrule
        \multirow{10}{*}{\rotatebox{90}{CCT}}
        & -      & \xmark & 76.4 & 43.9 & 70.7 & 50.3 & 65.6 & 43.4 & 73.2 & 35.6 & 11.5 & 71.2 & 48.3 & 72.9 \\
        & -      & AFA   & \gn{76.9} & \gn{51.9} & \gn{61.0} & \gn{58.5} & \gn{55.4} & \gn{50.7} & \gn{64.4} & \gn{39.0} & 17.6 & \gn{71.9} & \gn{38.4} & \gn{61.8} \\
        \cmidrule{2-15}
        & AugMix & \xmark & 76.1 & 47.3 & 66.8 & 52.2 & 63.1 & 45.3 & 71.0 & 37.9 & 18.6 & 70.7 & 49.3 & 72.8 \\
%        & AugMix & AugMax & & & & & & \\
        & AugMix & AFA    & \gn{\textbf{77.4}} & \gn{56.5} & \gn{55.6} & \gn{60.8} & \gn{52.2} & \gn{51.8} & \gn{62.8} & \gn{41.0} & \gn{17.4} & \gn{\textbf{72.5}} & \gn{37.9} & \gn{59.9} \\
        \cmidrule{2-15}
        & PRIME  & \xmark & 73.6 & 54.1 & 58.6 & 54.5 & 60.8 & 50.7 & 64.4 & 39.2 & 15.2 & 68.7 & 36.1 & 53.0 \\
        & PRIME  & AFA   & \gn{76.6} & \gn{\textbf{58.7}} & \gn{\textbf{52.8}} & \gn{\textbf{61.2}} & \gn{\textbf{52.0}} & \gn{\textbf{54.5}} & \gn{\textbf{59.4}} & \gn{\textbf{43.2}} & 15.8 & \gn{71.9} & \gn{\textbf{31.9}} & \gn{\textbf{51.2}} \\
        \cmidrule{2-15}
        & TA$^+$    & \xmark & 77.1 & 50.2 & 63.2 & 54.1 & 60.7 & 49.3 & 65.8 & 38.2 & 17.8 & 72.1 & 41.8 & 66.3 \\
        & TA$^+$    & AFA    & \rn{76.9} & \gn{56.0} & \gn{56.0} & \gn{59.1} & \gn{54.6} & \gn{53.1} & \gn{61.1} & \gn{41.1} & 15.0 & 72.1 & \gn{36.4} & \gn{58.5} \\
        \bottomrule
    \end{tabular}
    \caption{Robustness, generalization and consistency results on ImageNet-based benchmarks. Models with $^\dagger$ use the JSD loss. TrivialAugment (TA) has overlapping augmentations with IN-C ($^+$), and no other overlaps with other datasets. The \gn{green} colour indicates an improvement when the main augmentation is combined with AFA, while \rn{red} indicates no improvement. Results marked with \textbf{bold}/\textbf{\gn{bold}} are the best for a particular architecture.}
    \label{tab:imagenet_exps}
    \vspace{-\baselineskip}
\end{table*}

\noindent \textbf{Comparison with AugMax.}
We first report a direct comparison with AugMax~\cite{Wang2021Oct} in~\cref{tab:tfba_augmax_comp}, as AFA addresses the computational shortcomings of generating adversarial augmentations via PGD iterations, and of using a JSD loss for alignment of the distribution of original and (adversarially) augmented images. We use AugMix as main augmentation, as in AugMax, and ablate on the use of JSD and ACE loss. 

We show that AFA achieves comparable (or better) performance than AugMax, despite it being much less computational intensive. We indeed demonstrate that we can generate adversarial augmentations by only adding (weighted) Fourier-basis waves per color channel, not requiring PGD steps, and can train the models using an extra cross-entropy instead of the expensive JSD loss.
The improvements granted by our approach are particularly evident in the case of ImageNet (using ACE), where we gain $1.6\%$ of standard accuracy and $4.1\%$ of robust accuracy ($5.6\%$ mCE) performance w.r.t. AugMax. 
Considering the increased computational efficiency and the simplicity of adversarial augmentation method, AFA is a more versatile and effective tool than AugMax. Hence, in the rest of the paper, we do not report further results of the AugMax framework, due to its high computational requirements, which complicate the training of larger models (e.g. ResNet-50 and CCT).

\noindent \textbf{Robustness, generalization and consistency.}
In~\cref{tab:imagenet_exps}, we report results achieved by AFA combined with different visual augmentation methods, AugMix, PRIME, TrivialAugment (TA), to train different architectures (ResNet, CCT). We evaluate robustness to common corruptions on IN-C, IN-$\bar{\text{C}}$ and IN-3DCC, OOD generalisation on IN-v2 and IN-R, and consistency w.r.t. increasing perturbations on IN-P.  

AFA generally contributes to a boost of performance (green colored results in~\cref{tab:imagenet_exps}) when combined with different visual augmentation techniques, reducing the robustness and generalization gap for different model architectures.
When compared to another Fourier-based augmentation technique, APR-SP~\cite{chen2021amplitudephase}, AFA outperforms it on all benchmarks when trained with only standard augmentation techniques.
Also in the case of AugMix and AFA, we record better overall performance over AugMix alone, when using both the ACE loss and loss with JSD.
For the transformer architecture CCT, training with AFA contributes to an even stronger improvement in all tests.
These results stay consistent for smaller resolution datasets (CIFAR and TIN), as we report at the end of this section.

\begin{figure*}[!t]
    \centering
    \footnotesize
    \input{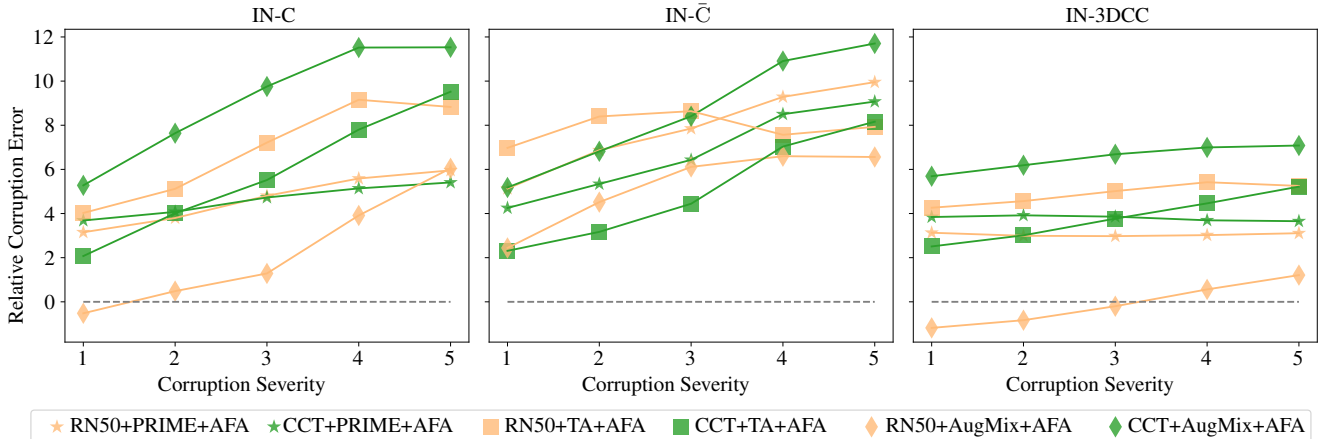}
    \vspace{-2em}
    \caption{Relative error per corruption severity, computed by subtracting the classification error of models trained with PRIME, TrivialAugment, and AugMix with that of corresponding models trained with PRIME+AFA, TrivialAugment+AFA, and AugMix+AFA. }
    \label{fig:plot_rn18}
\end{figure*}

\noindent \textbf{Robustness to high-severity corruptions.}
AFA contributes to a consistent improvement of robustness of models at increasing corruption severity.
We compute the relative corruption error, namely the difference between the corruption error of models trained with a visual augmentation technique only and those trained with both visual augmentations and AFA, and report it in ~\cref{fig:plot_rn18} for different corruption severity.
A positive value indicates that models trained with the addition of AFA have better robustness.
For higher corruption severity, AFA contributes to stronger robustness, measured by an increase in the relative corruption error in~\cref{fig:plot_rn18}. 
The improvements obtained by AFA on IN-3DCC are slightly less pronounced than those on IN-C and IN-$\mathrm{\Bar{C}}$. This is attributable to the specific corruptions in IN-3DCC that concern 3D geometric information, and are somewhat more complicated  image transformations. However, AFA contributes to a substantial improvement w.r.t. to models trained without it.
We thus highlight that AFA is very beneficial for increasing robustness to aggressive corruptions of the test images. Details of the results at different severity are in Appendix~\ref{app:per_severity}.

\noindent \textbf{Fourier heatmap: robustness in the frequency spectrum.} 
We further evaluate the robustness of models to perturbations at specific frequencies, using test images perturbed with frequency noises according to~\cite{Yin2019Jun}. We present the results in the form of Fourier heatmaps, see~\cref{fig:fheats} for heatmaps of ResNet18 models (trained on ImageNet), and in Appendix~\ref{app:heatmap_cct} for the heatmaps of CCT models. 
The intensity of a pixel at location $(u,v)$ in the heatmap indicates the classification error of a model tested on images perturbed by Fourier noise at frequency $(u,v)$ in the frequency spectrum (implementation details are in Appendix~\ref{app:expl_metrics}). 
ResNet18 trained with standard augmentations setting (baseline) is very sensitive to perturbations at low and middle-high frequency (see~\cref{fig:fheats}), while those trained with visual augmentations like PRIME and TrivialAugment (TA) still show vulnerability at low and middle-high frequency noise.
When training models with AFA, i.e. PRIME+AFA and TA+AFA, the models become more robust to frequency pertubations, especially at middle-high frequency. AFA can provide extensive robustness to frequency perturbations and bridge the robustness gap that visual augmentation might not cover. 

\begin{figure}[!t]
    \centering
    \input{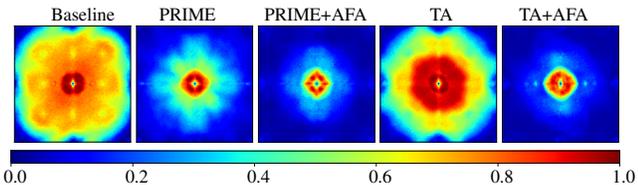}
    \vspace{-2em}
    \caption{Fourier heatmaps of ResNet18 trained with standard setup, and PRIME and TrivialAugment, with and without AFA. }
    \label{fig:fheats}
\vspace{-1\baselineskip}
\end{figure}

\noindent \textbf{Results on CIFAR and TIN.}
In~\cref{tab:low_res_exps}, we present the robustness results on smaller resolution datasets, C10 and C100. The results on TIN are in Appendix~\ref{app:tin_results}. There results are inline with those reported on IN in~\cref{tab:imagenet_exps}.

\begin{table}[!t]
    \centering
    \footnotesize
    \begin{tabular}{ccccccc}
        &        &        & \multicolumn{2}{c}{\bfseries C10-C} & \multicolumn{2}{c}{\bfseries C100-C} \\
        \bfseries - & \bfseries Main & \bfseries Auxiliary & \bfseries SA$\uparrow$ & \bfseries RA$\uparrow$ & \bfseries SA$\uparrow$ & \bfseries RA$\uparrow$ \\ \toprule
        \multirow{6}{*}{\rotatebox{90}{ResNet18}}
        & -      & \xmark & 94.15 & 73.67 & 78.27 & 48.30 \\
%        & AFA   & \xmark & 92.36 & 83.25 & & \\
        & -      & AFA   & \gn{94.69} & \gn{88.22} & \rn{77.91} & \gn{62.53} \\ \cmidrule{2-7}
        % & AugMix & \xmark & \textbf{95.43} & 86.56 & 77.68 & 62.45 \\
        & AugMix$^\dagger$ & \xmark & \textbf{95.47} & 86.48 & 78.72 & 61.61 \\
        % & AugMix & AFA    & \textbf{95.44} & \gn{89.81} & \rn{77.80} & \gn{66.69} \\
        & AugMix$^\dagger$ & AFA   & \rn{95.24} & \gn{89.96} & \gn{\textbf{78.99}} & \gn{65.96} \\ \cmidrule{2-7}
        & PRIME  & \xmark & 94.38 & 89.81 & 75.49 & 66.16 \\
        & PRIME  & AFA   & \gn{94.54} & \gn{\textbf{90.64}} & \gn{76.16} & \gn{\textbf{68.48}} \\
        % & PRIME$^\dagger$  & AFA   & 94.29 & 90.69 & 78.00 & 68.93 \\ 
        \midrule
        \multirow{6}{*}{\rotatebox{90}{CCT}}
        & -      & \xmark & 95.67 & 80.45 & \textbf{78.37} & 54.20 \\
%        & AFA   & \xmark & 93.53 & 85.44 & 72.08 & 58.36 \\
        & -      & AFA   & \gn{\textbf{95.94}} & \gn{88.13} & \rn{77.47} & \gn{61.40} \\ \cmidrule{2-7}
        & AugMix & \xmark & 95.10 & 85.42 & 75.79 & 60.83 \\
        & AugMix & AFA   & \gn{\textbf{95.93}} & \gn{90.57} & \gn{77.22} & \gn{66.18} \\ \cmidrule{2-7}
        & PRIME  & \xmark & 95.30 & 90.56 & 76.65 & \textbf{67.92} \\
        & PRIME  & AFA   & \gn{95.49} & \gn{\textbf{91.40}} & \rn{76.50} & \rn{67.89} \\ 
        \midrule
                \multirow{2}{*}{\rotatebox{90}{CVT}}
        & -      & \xmark & 94.31 & 77.02 & 75.53 & 48.25 \\
%        & AFA   & \xmark & 93.53 & 85.44 & 72.08 & 58.36 \\
        & -      & AFA   & \gn{\textbf{94.53}} & \gn{\textbf{87.03}} & \gn{\textbf{76.96}} & \gn{\textbf{60.12}} \\
        \midrule
        \multirow{2}{*}{\rotatebox{90}{VIT}}
        & -      & \xmark & 94.46 & 75.97 & 74.26 & 50.88 \\
%        & AFA   & \xmark & 93.53 & 85.44 & 72.08 & 58.36 \\
        & -      & AFA   & \gn{\textbf{94.58}} & \gn{\textbf{86.71}} & \gn{\textbf{75.13}} & \gn{\textbf{58.25}} \\ 
        \bottomrule
%        & PRIME  & AFA   &       &       & & \\ \cmidrule{2-7}
%        & TA     & \xmark & 96.21 & 88.26 & & \\
%        & TA     & AFA   & 96.27 & 90.90 & & \\ \bottomrule
    \end{tabular}
    \caption{Results for C10-C and C100-C with ResNet18, CCT. CVT and ViT-Lite. Models with $^\dagger$ use loss with JSD.}
    \vspace{-\baselineskip}
    \label{tab:low_res_exps}
\end{table}

\subsection{Ablation}
\label{sec:ablation}

\paragraph{Auxiliary components.}\label{par:aux_abl}
We investigate the contribution and importance of the auxiliary components in improving model robustness. We trained models with AFA-augmented images, passing through only the main components or the auxiliary components. The results in~\cref{tab:afa_aux_ablation_comp}, i.e. lower RA and higher mCE of models trained with AFA applied only in the main components, highlight the importance of AFA auxiliary components. The auxiliary components play a crucial role in mitigating the impact of aggressive adversarial distribution shifts induced by AFA. By doing so, they contribute to model ability to learn from the original distribution, while AFA facilitates learning robustness to distribution shifts.
This is also highlighted in the substantial decrease in SA for models not employing auxiliary components.
While model robustness  improves under both settings, the performance gain for the auxiliary setting is three to five percentage points higher across all datasets.

\begin{table}[!t]
    \centering
    \footnotesize
    \begin{tabular}{ccccccc}
         \bfseries - & \bfseries Main & \bfseries Auxiliary & \bfseries SA$\uparrow$ & \bfseries RA$\uparrow$ & \bfseries mCE$\downarrow$ \\ \toprule
         \multirow{3}{*}{\rotatebox{90}{C10}}
             &   -  & \xmark & 94.15 & 73.67 & - \\
             & AFA & \xmark & 92.36 & 83.25 & - \\
             &   -  & AFA   & \bfseries 94.69 & \textbf{88.22} & - \\ \cmidrule{2-6}
         \multirow{3}{*}{\rotatebox{90}{C100}}
             &   -  & \xmark & 78.27 & 48.30 & - \\
             & AFA & \xmark & 72.34 & 58.70 & - \\
             &   -  & AFA   & \bfseries 77.91 & \textbf{62.53} & - \\ \cmidrule{2-6}
         \multirow{3}{*}{\rotatebox{90}{TIN}}
             &   -  & \xmark & 61.64 & 23.91 & 100.00 \\
             & AFA & \xmark & 59.04 & 28.87 &  93.45 \\
             &   -  & AFA   & \bfseries 62.52 & \textbf{33.35} &  \textbf{87.58} \\ \cmidrule{2-6}
         \multirow{3}{*}{\rotatebox{90}{IN}}
             &   -  & \xmark & 68.9 & 32.9 & 84.7 \\
             & AFA & \xmark & 66.7 & 33.3 & 84.4 \\
             &   -  & AFA   & \bfseries 68.2 & \textbf{35.9} & \textbf{81.0} \\ \bottomrule
    \end{tabular}
    \caption{Ablation results ResNet18 trained with and without Auxiliary Components on C10, C100, TinyImageNet and ImageNet.}
    \label{tab:afa_aux_ablation_comp}
    \vspace{-2\baselineskip}
\end{table}

\noindent \textbf{ACE vs JSD.}
As part of our method, we replaced the use of JSD with ACE which is less computationally burdening. We thus performed an ablation analysis of the tradeoff of using JSD. 
We report results for robustness using mCE and Robust Accuracy (RA) in~\cref{fig:with_without_jsd}, and observe that JSD does not significantly improve the robustness of our model to image corruptions, despite it being more computationally heavy than using ACE.
Using JSD also results in slightly worse robustness on C100. Given the minimal differences, we opt for the simpler ACE loss for training with the AFA augmentation pipeline and only using JSD if other techniques (e.g. AugMix) employ them.

\noindent \textbf{Effect of hyperparameter $1 / \lambda$.}\label{par:strength_abl}
We studied also the contribution of the mean $1 / \lambda$ of the exponential distribution that we use to sample the weight factor for the channel-wise application of the Fourier-basis augmentations. We provide the results in~\cref{fig:strength_ablation}, and observe that our method has low sensitivity to the choice of the rate parameter. This is attributable to the choice of the exponential distribution that allows larger values to be sampled even if they are less likely. We indeed observe that larger values of $1 / \lambda$, which result in larger perturbations (in the range of 10 to 15), result in stronger gains in robustness.
At the same time, there is no clear trend in the standard accuracy on the clean dataset, with only minimal variations for the larger values, indicating that the choice of the $1 / \lambda$ value does not have a specific influence on the correct functioning of AFA.
\pgfplotsset{
% override style for non-boxed plots
    % which is the case for both sub-plots
    every non boxed x axis/.style={} 
}

\begin{filecontents}{data_c.csv}
Dataset,CE,CE_JSD
C10,88.22,88.34
C100,62.53,60.56
\end{filecontents}

\begin{filecontents}{data_in.csv}
Dataset,CE,CE_JSD
TIN,87.58,85.76
IN,83.6,80.4
\end{filecontents}

\begin{figure}[!t]

\begin{subfigure}[b]{0.45\linewidth}
\begin{tikzpicture}
\pgfplotstableread[col sep=comma]{data_c.csv}\datatable

\begin{axis}[
    ybar,
    x=1.5cm,
    enlarge x limits = {abs=0.5 cm},
    ymin=30, ymax=100,
    width=\linewidth,
    height=1.75in,
    bar width=0.2cm,
    ylabel={RA $\uparrow$ (\%)},
    symbolic x coords={C10,C100},
    xtick=data,
    minor y tick num=5,
    ymajorgrids,
    % yminorgrids,
    x tick label style={rotate=0, anchor=north},
]
\addplot table [x=Dataset, y=CE] {\datatable};
\addplot table [x=Dataset, y=CE_JSD] {\datatable};

\legend{\scriptsize ACE, \scriptsize CE w/ JSD}
\legend{};
\end{axis}
\end{tikzpicture}
\end{subfigure}%
\hspace{1em}%
\begin{subfigure}[b]{0.45\linewidth}
\begin{tikzpicture}
\pgfplotstableread[col sep=comma]{data_in.csv}\datatable
\begin{axis}[
    ybar,
    x=1.5cm,
    enlarge x limits = {abs=0.5 cm},
    ymin=30, ymax=100,
    width=\linewidth,
    height=1.75in,
    bar width=0.2cm,
    ylabel={mCE $\downarrow$ (\%)},
    symbolic x coords={TIN,IN},
    xtick=data,
    minor y tick num=5,
    ytick pos=left,
    ymajorgrids,
    % yminorgrids,
    x tick label style={rotate=0, anchor=north},
]
\addplot table [x=Dataset, y=CE] {\datatable};
\addplot table [x=Dataset, y=CE_JSD] {\datatable};
\legend{\scriptsize ACE, \scriptsize CE w/ JSD}
\legend{};
\end{axis}
\end{tikzpicture}%
\end{subfigure}%

\begin{subfigure}[b]{\linewidth}
    \centering
    
    \begin{tikzpicture}
        \begin{customlegend}[legend columns=-1,legend style={draw=none,column sep=1ex},legend entries={\scriptsize ACE, \scriptsize CE w/ JSD}]
        \addlegendimage{blue,fill=blue!30!white,ybar,ybar legend}
        \addlegendimage{red,fill=red!30!white,ybar,ybar legend}
        \end{customlegend}
    \end{tikzpicture}
\end{subfigure}
    \vspace{-2em}
    \caption{Comparison of using objective with and without the JSD term. All models are ResNet-18 trained with only AFA in the auxiliary component and no other augmentations. When used with JSD two batches passed through Auxiliary components and there was no main augmentation (in total 3 batches, 1 clean and 2 AFA).}
    \label{fig:with_without_jsd}
\end{figure}
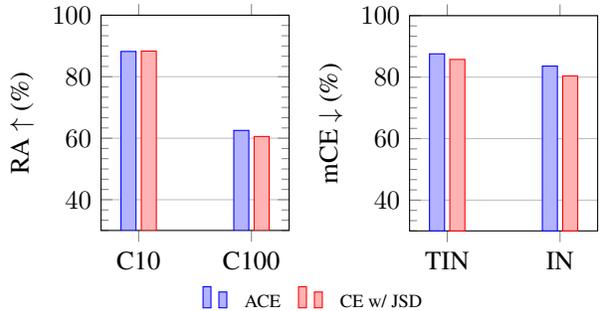

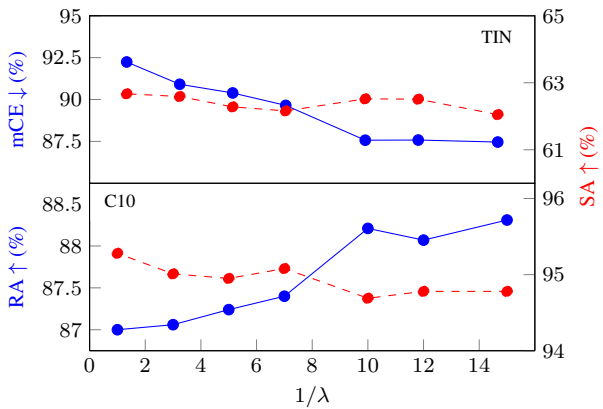
\begin{figure}[!t]
\footnotesize
\begin{tikzpicture}        
\begin{groupplot}[
    group style={
        group name=my fancy plots,
        group size=1 by 2,
        xticklabels at=edge bottom,
        vertical sep=0pt
    },
    width=0.9\linewidth,
]

\nextgroupplot[ymin=85,ymax=95,
               ytick={87.5, 90, ..., 95},
               xtick=\empty,
               ytick pos=left,
               % axis x line=top, 
               % axis y discontinuity=parallel,
               height=1.5in,
               ylabel={\color{blue}{mCE} $\downarrow$ (\%)},
               legend image post style={scale=0},
               legend style={draw=none}
]
\addplot[mark=none] coordinates {(0, 0)};
\addlegendentry{\scriptsize TIN};
\addplot[mark=*,blue] coordinates {
      (15, 87.46)
      (12, 87.58)
      (10, 87.57)
      (7, 89.65)
      (5, 90.39)
      (3, 90.91)
      (1, 92.24)
    };

\nextgroupplot[ymin=86.75,ymax=88.75,
               axis x line=bottom,
               ytick={87, 87.5, 88, 88.5},
               xtick pos=lower,
               ytick pos=left,
               height=1.5in,
               xlabel={$1 / \lambda$},
               xmin=0,
               xmax=16,
               xtick={0, 2, 4, ..., 15},
               ylabel={\color{blue}{RA} $\uparrow$ (\%)},
               % every axis y label/.append style={at=(ticklabel cs:1.1)},
               legend image post style={scale=0},
               legend style={draw=none},
               legend style={at={(0,1)},anchor=north west}
]
\addplot[mark=none] coordinates {(0, 0)};
\addlegendentry{\scriptsize C10};

\addplot[mark=*,blue] coordinates {
      (15, 88.31)
      (12, 88.07)
      (10, 88.21)
      (7, 87.40)
      (5, 87.24)
      (3, 87.06)
      (1, 87.00)
    };

\end{groupplot}

\begin{groupplot}[
    group style={
        group name=my fancy plots,
        group size=1 by 2,
        vertical sep=0pt,
    },
    width=0.9\linewidth,
]

\nextgroupplot[ymin=60,ymax=65,
               ytick={61, 63, ..., 65},
               xtick=\empty,
               ytick pos=right,
               % axis x line=top, 
               % axis y discontinuity=parallel,
               height=1.5in
               ]
\addplot[mark=*,red,dashed] coordinates {
      (15, 62.05)
      (12, 62.51)
      (10, 62.52)
      (7, 62.16)
      (5, 62.28)
      (3, 62.59)
      (1, 62.67)
    };

\nextgroupplot[ymin=94,ymax=96.2,
               axis x line=bottom,
               ytick={94, 95, ..., 96},
               xtick=\empty,
               ytick pos=right,
               height=1.5in,
               xmin=0,
               xmax=16,
               ylabel={\color{red}{SA} $\uparrow$ (\%)},
               every axis y label/.append style={at=(ticklabel cs:1.1)}
]
\addplot[mark=*,red,dashed] coordinates {
      (15, 94.78)
      (12, 94.78)
      (10, 94.69)
      (7, 95.08)
      (5, 94.95)
      (3, 95.01)
      (1, 95.28)
    };
\end{groupplot}
\end{tikzpicture}
\vspace{-\baselineskip}
\caption{Trend of the mCE and SA with respect to the rate parameter. The models were trained using AFA in the auxiliary setting and no other augmentations for the main.}
\vspace{-\baselineskip}
\label{fig:strength_ablation}
\end{figure}

\section{Conclusions}\label{sec:conclusion}
We proposed an efficient data augmentation technique called AFA, which complements existing visual augmentation techniques by filling the augmentation gap, that they do not cover in the Fourier domain. AFA perturbs the frequency components of images and generates adversarial samples.
By leveraging Fourier-basis functions and the auxiliary augmentation setting %in the augmentation process and auxiliary components to differentiate main and auxiliary data distributions, 
we demonstrate that AFA allows the models to learn from aggressive/adversarial input changes. We performed extensive experiments on benchmark datasets, and demonstrated that AFA benefits the robustness of models against common image corruptions, the consistency of predictions when facing increasing perturbations, and the OOD generalization performance. 
Being complementary to other augmentation techniques, AFA can further boost the robustness of models, especially against strong corruptions and perturbation, and it also results in better robustness in the frequency spectrum. 
We foresee that investigating the use of Fourier-basis functions on the training process of neural networks would provide promising improvement to model performance, thus encouraging their reliability in real scenarios.

%%%%%%%%% REFERENCES
{\small
\bibliographystyle{ieee_fullname}
\bibliography{egbib}

\begin{thebibliography}{10}\itemsep=-1pt

\bibitem{chen2020simple}
Ting Chen, Simon Kornblith, Mohammad Norouzi, and Geoffrey Hinton.
\newblock A simple framework for contrastive learning of visual
  representations, 2020.

\bibitem{cubuk2019autoaugment}
Ekin~D. Cubuk, Barret Zoph, Dandelion Mane, Vijay Vasudevan, and Quoc~V. Le.
\newblock Autoaugment: Learning augmentation policies from data, 2019.

\bibitem{NEURIPS2020_d85b63ef}
Ekin~Dogus Cubuk, Barret Zoph, Jon Shlens, and Quoc Le.
\newblock Randaugment: Practical automated data augmentation with a reduced
  search space.
\newblock In H. Larochelle, M. Ranzato, R. Hadsell, M.F. Balcan, and H. Lin,
  editors, {\em Advances in Neural Information Processing Systems}, volume~33,
  pages 18613--18624. Curran Associates, Inc., 2020.

\bibitem{5206848}
Jia Deng, Wei Dong, Richard Socher, Li-Jia Li, Kai Li, and Li Fei-Fei.
\newblock Imagenet: A large-scale hierarchical image database.
\newblock In {\em 2009 IEEE Conference on Computer Vision and Pattern
  Recognition}, pages 248--255, 2009.

\bibitem{chen2021amplitudephase}
Chen \etal.
\newblock Amplitude-phase recombination: Rethinking robustness of convolutional
  neural networks in frequency domain, 2021.

\bibitem{hassani2022escaping}
Hassani \etal.
\newblock Escaping the big data paradigm with compact transformers, 2022.

\bibitem{faghri2023reinforce}
Fartash Faghri, Hadi Pouransari, Sachin Mehta, Mehrdad Farajtabar, Ali Farhadi,
  Mohammad Rastegari, and Oncel Tuzel.
\newblock Reinforce data, multiply impact: Improved model accuracy and
  robustness with dataset reinforcement, 2023.

\bibitem{Gao_2023_ICCV}
Zhiqiang Gao, Kaizhu Huang, Rui Zhang, Dawei Liu, and Jieming Ma.
\newblock Towards better robustness against common corruptions for unsupervised
  domain adaptation.
\newblock In {\em Proceedings of the IEEE/CVF International Conference on
  Computer Vision (ICCV)}, pages 18882--18893, October 2023.

\bibitem{greco2023}
Antonio Greco, Nicola Strisciuglio, Mario Vento, and Vincenzo Vigilante.
\newblock Benchmarking deep networks for facial emotion recognition in the
  wild.
\newblock {\em Multimedia Tools and Applications}, 82(8):11189--11220, 2023.

\bibitem{Hao_2023_WACV}
Xiaoshuai Hao, Yi Zhu, Srikar Appalaraju, Aston Zhang, Wanqian Zhang, Bo Li,
  and Mu Li.
\newblock Mixgen: A new multi-modal data augmentation.
\newblock In {\em Proceedings of the IEEE/CVF Winter Conference on Applications
  of Computer Vision (WACV) Workshops}, pages 379--389, January 2023.

\bibitem{Hassani2021Apr}
Ali Hassani, Steven Walton, Nikhil Shah, Abulikemu Abuduweili, Jiachen Li, and
  Humphrey Shi.
\newblock {Escaping the Big Data Paradigm with Compact Transformers}.
\newblock {\em arXiv}, Apr. 2021.

\bibitem{he2015deep}
Kaiming He, Xiangyu Zhang, Shaoqing Ren, and Jian Sun.
\newblock Deep residual learning for image recognition, 2015.

\bibitem{hendrycks2021faces}
Dan Hendrycks, Steven Basart, Norman Mu, Saurav Kadavath, Frank Wang, Evan
  Dorundo, Rahul Desai, Tyler Zhu, Samyak Parajuli, Mike Guo, Dawn Song, Jacob
  Steinhardt, and Justin Gilmer.
\newblock The many faces of robustness: A critical analysis of
  out-of-distribution generalization, 2021.

\bibitem{hendrycks2019benchmarking}
Dan Hendrycks and Thomas Dietterich.
\newblock Benchmarking neural network robustness to common corruptions and
  perturbations, 2019.

\bibitem{Hendrycks2019Dec}
Dan Hendrycks, Norman Mu, Ekin~D. Cubuk, Barret Zoph, Justin Gilmer, and Balaji
  Lakshminarayanan.
\newblock {AugMix: A Simple Data Processing Method to Improve Robustness and
  Uncertainty}.
\newblock {\em arXiv}, Dec. 2019.

\bibitem{pmlr-v202-hounie23a}
Ignacio Hounie, Luiz F.~O. Chamon, and Alejandro Ribeiro.
\newblock Automatic data augmentation via invariance-constrained learning.
\newblock In Andreas Krause, Emma Brunskill, Kyunghyun Cho, Barbara Engelhardt,
  Sivan Sabato, and Jonathan Scarlett, editors, {\em Proceedings of the 40th
  International Conference on Machine Learning}, volume 202 of {\em Proceedings
  of Machine Learning Research}, pages 13410--13433. PMLR, 23--29 Jul 2023.

\bibitem{Kamann2021}
Christoph Kamann and Carsten Rother.
\newblock Benchmarking the robustness of semantic segmentation models with
  respect to common corruptions.
\newblock {\em International Journal of Computer Vision}, 129(2):462--483, Feb
  2021.

\bibitem{kar20223d}
Oğuzhan~Fatih Kar, Teresa Yeo, Andrei Atanov, and Amir Zamir.
\newblock 3d common corruptions and data augmentation, 2022.

\bibitem{NEURIPS2020_2ba59664}
Ildoo Kim, Younghoon Kim, and Sungwoong Kim.
\newblock Learning loss for test-time augmentation.
\newblock In H. Larochelle, M. Ranzato, R. Hadsell, M.F. Balcan, and H. Lin,
  editors, {\em Advances in Neural Information Processing Systems}, volume~33,
  pages 4163--4174. Curran Associates, Inc., 2020.

\bibitem{cifar10}
Alex Krizhevsky, Vinod Nair, and Geoffrey Hinton.
\newblock Cifar-10 (canadian institute for advanced research).

\bibitem{cifar100}
Alex Krizhevsky, Vinod Nair, and Geoffrey Hinton.
\newblock Cifar-100 (canadian institute for advanced research).

\bibitem{tin}
Ya Le and Xuan~S. Yang.
\newblock Tiny imagenet visual recognition challenge.
\newblock 2015.

\bibitem{9412611}
Xiu-Chuan Li, Xu-Yao Zhang, Fei Yin, and Cheng-Lin Liu.
\newblock F-mixup: Attack cnns from fourier perspective.
\newblock In {\em 2020 25th International Conference on Pattern Recognition
  (ICPR)}, pages 541--548, 2021.

\bibitem{liu2023improving}
Chang Liu, Wenzhao Xiang, Yuan He, Hui Xue, Shibao Zheng, and Hang Su.
\newblock Improving model generalization by on-manifold adversarial
  augmentation in the frequency domain, 2023.

\bibitem{liu2023outofdistribution}
Jiashuo Liu, Zheyan Shen, Yue He, Xingxuan Zhang, Renzhe Xu, Han Yu, and Peng
  Cui.
\newblock Towards out-of-distribution generalization: A survey, 2023.

\bibitem{Liu_2023_ICCV}
Siao Liu, Zhaoyu Chen, Yang Liu, Yuzheng Wang, Dingkang Yang, Zhile Zhao,
  Ziqing Zhou, Xie Yi, Wei Li, Wenqiang Zhang, and Zhongxue Gan.
\newblock Improving generalization in visual reinforcement learning via
  conflict-aware gradient agreement augmentation.
\newblock In {\em Proceedings of the IEEE/CVF International Conference on
  Computer Vision (ICCV)}, pages 23436--23446, October 2023.

\bibitem{Liu_2023_CVPR}
Yang Liu, Shen Yan, Laura Leal-Taix\'e, James Hays, and Deva Ramanan.
\newblock Soft augmentation for image classification.
\newblock In {\em Proceedings of the IEEE/CVF Conference on Computer Vision and
  Pattern Recognition (CVPR)}, pages 16241--16250, June 2023.

\bibitem{101007}
Yuyang Long, Qilong Zhang, Boheng Zeng, Lianli Gao, Xianglong Liu, Jian Zhang,
  and Jingkuan Song.
\newblock Frequency domain model augmentation for adversarial attack.
\newblock In Shai Avidan, Gabriel Brostow, Moustapha Ciss{\'e}, Giovanni~Maria
  Farinella, and Tal Hassner, editors, {\em Computer Vision -- ECCV 2022},
  pages 549--566, Cham, 2022. Springer Nature Switzerland.

\bibitem{101007978}
Yuyang Long, Qilong Zhang, Boheng Zeng, Lianli Gao, Xianglong Liu, Jian Zhang,
  and Jingkuan Song.
\newblock Frequency domain model augmentation for adversarial attack.
\newblock In Shai Avidan, Gabriel Brostow, Moustapha Ciss{\'e}, Giovanni~Maria
  Farinella, and Tal Hassner, editors, {\em Computer Vision -- ECCV 2022},
  pages 549--566, Cham, 2022. Springer Nature Switzerland.

\bibitem{ma2023learning}
Guozheng Ma, Linrui Zhang, Haoyu Wang, Lu Li, Zilin Wang, Zhen Wang, Li Shen,
  Xueqian Wang, and Dacheng Tao.
\newblock Learning better with less: Effective augmentation for
  sample-efficient visual reinforcement learning, 2023.

\bibitem{Marrie_2023_CVPR}
Juliette Marrie, Michael Arbel, Diane Larlus, and Julien Mairal.
\newblock Slack: Stable learning of augmentations with cold-start and kl
  regularization.
\newblock In {\em Proceedings of the IEEE/CVF Conference on Computer Vision and
  Pattern Recognition (CVPR)}, pages 24306--24314, June 2023.

\bibitem{mintun2021on}
Eric Mintun, Alexander Kirillov, and Saining Xie.
\newblock On interaction between augmentations and corruptions in natural
  corruption robustness, 2021.

\bibitem{Modas2021Dec}
Apostolos Modas, Rahul Rade, Guillermo
  Ortiz-Jim{\ifmmode\acute{e}\else\'{e}\fi}nez, Seyed-Mohsen Moosavi-Dezfooli,
  and Pascal Frossard.
\newblock {PRIME: A few primitives can boost robustness to common corruptions}.
\newblock {\em arXiv}, Dec. 2021.

\bibitem{Muller2021Mar}
Samuel~G. M{\ifmmode\ddot{u}\else\"{u}\fi}ller and Frank Hutter.
\newblock {TrivialAugment: Tuning-free Yet State-of-the-Art Data Augmentation}.
\newblock {\em arXiv}, Mar. 2021.

\bibitem{recht2019imagenet}
Benjamin Recht, Rebecca Roelofs, Ludwig Schmidt, and Vaishaal Shankar.
\newblock Do imagenet classifiers generalize to imagenet?, 2019.

\bibitem{Saikia_2021_ICCV}
Tonmoy Saikia, Cordelia Schmid, and Thomas Brox.
\newblock Improving robustness against common corruptions with frequency biased
  models.
\newblock In {\em Proceedings of the IEEE/CVF International Conference on
  Computer Vision (ICCV)}, pages 10211--10220, October 2021.

\bibitem{Soklaski2022Feb}
Ryan Soklaski, Michael Yee, and Theodoros Tsiligkaridis.
\newblock {Fourier-Based Augmentations for Improved Robustness and Uncertainty
  Calibration}.
\newblock {\em arXiv}, Feb. 2022.

\bibitem{strisciuglio2022visual}
Nicola Strisciuglio and George Azzopardi.
\newblock Visual response inhibition for increased robustness of convolutional
  networks to distribution shifts.
\newblock In {\em NeurIPS 2022 Workshop on Distribution Shifts: Connecting
  Methods and Applications}, 2022.

\bibitem{Strisciuglio2020}
Nicola Strisciuglio, Manuel Lopez-Antequera, and Nicolai Petkov.
\newblock Enhanced robustness of convolutional networks with a push--pull
  inhibition layer.
\newblock {\em Neural Computing and Applications}, 32(24):17957--17971, 2020.

\bibitem{Suzuki_2022_CVPR}
Teppei Suzuki.
\newblock Teachaugment: Data augmentation optimization using teacher knowledge.
\newblock In {\em Proceedings of the IEEE/CVF Conference on Computer Vision and
  Pattern Recognition (CVPR)}, pages 10904--10914, June 2022.

\bibitem{10190316}
An Wang, Mobarakol Islam, Mengya Xu, and Hongliang Ren.
\newblock Curriculum-based augmented fourier domain adaptation for robust
  medical image segmentation.
\newblock {\em IEEE Transactions on Automation Science and Engineering}, pages
  1--13, 2023.

\bibitem{Wang2021Oct}
Haotao Wang, Chaowei Xiao, Jean Kossaifi, Zhiding Yu, Anima Anandkumar, and
  Zhangyang Wang.
\newblock {AugMax: Adversarial Composition of Random Augmentations for Robust
  Training}.
\newblock {\em arXiv}, Oct. 2021.

\bibitem{wang2023dfmx}
Shunxin Wang, Christoph Brune, Raymond Veldhuis, and Nicola Strisciuglio.
\newblock {DFM}-x: Augmentation by leveraging prior knowledge of shortcut
  learning.
\newblock In {\em 4th Visual Inductive Priors for Data-Efficient Deep Learning
  Workshop}, 2023.

\bibitem{wang2022frequency}
Shunxin Wang, Raymond Veldhuis, Christoph Brune, and Nicola Strisciuglio.
\newblock Frequency shortcut learning in neural networks.
\newblock In {\em NeurIPS 2022 Workshop on Distribution Shifts: Connecting
  Methods and Applications}, 2022.

\bibitem{Wang2023May}
Shunxin Wang, Raymond Veldhuis, Christoph Brune, and Nicola Strisciuglio.
\newblock {Larger is not Better: A Survey on the Robustness of Computer Vision
  Models against Common Corruptions}.
\newblock {\em arXiv}, May 2023.

\bibitem{Wang2023ICCV}
Shunxin Wang, Raymond Veldhuis, Christoph Brune, and Nicola Strisciuglio.
\newblock What do neural networks learn in image classification? a frequency
  shortcut perspective.
\newblock In {\em Proceedings of the IEEE/CVF International Conference on
  Computer Vision (ICCV)}, pages 1433--1442, October 2023.

\bibitem{Xie_2020_CVPR}
Qizhe Xie, Minh-Thang Luong, Eduard Hovy, and Quoc~V. Le.
\newblock Self-training with noisy student improves imagenet classification.
\newblock In {\em Proceedings of the IEEE/CVF Conference on Computer Vision and
  Pattern Recognition (CVPR)}, June 2020.

\bibitem{XU2023109474}
Qinwei Xu, Ruipeng Zhang, Ziqing Fan, Yanfeng Wang, Yi-Yan Wu, and Ya Zhang.
\newblock Fourier-based augmentation with applications to domain
  generalization.
\newblock {\em Pattern Recognition}, 139:109474, 2023.

\bibitem{Yin2019Jun}
Dong Yin, Raphael~Gontijo Lopes, Jonathon Shlens, Ekin~D. Cubuk, and Justin
  Gilmer.
\newblock {A Fourier Perspective on Model Robustness in Computer Vision}.
\newblock {\em arXiv}, June 2019.

\bibitem{yucel2023hybridaugment}
Mehmet~Kerim Yucel, Ramazan~Gokberk Cinbis, and Pinar Duygulu.
\newblock Hybridaugment++: Unified frequency spectra perturbations for model
  robustness, 2023.

\bibitem{yun2019cutmix}
Sangdoo Yun, Dongyoon Han, Seong~Joon Oh, Sanghyuk Chun, Junsuk Choe, and
  Youngjoon Yoo.
\newblock Cutmix: Regularization strategy to train strong classifiers with
  localizable features, 2019.

\bibitem{zhang2018mixup}
Hongyi Zhang, Moustapha Cisse, Yann~N. Dauphin, and David Lopez-Paz.
\newblock mixup: Beyond empirical risk minimization.
\newblock In {\em International Conference on Learning Representations}, 2018.

\bibitem{Zheng2016Apr}
Stephan Zheng, Yang Song, Thomas Leung, and Ian Goodfellow.
\newblock {Improving the Robustness of Deep Neural Networks via Stability
  Training}.
\newblock {\em arXiv}, Apr. 2016.

\end{thebibliography}
}

\newpage

\appendix
\section{Implementation Details}\label{app:impl_details}
Below, we report the training setup in detail. For all methods, and a particular dataset and architecture, the same training setup was used unless stated otherwise. 

\paragraph{Convolution Neural Networks}
For CIFAR-100 and Tiny ImageNet we use the SGD optimiser with an initial learning rate of 0.2, Nesterov momentum of 0.9 with a batch size of 128 training for 100 epochs. We use a weight decay of $0.0005$ and we do not decay the affine parameters of normalisation. 
For CIFAR-10, we follow the same setup as above, except we train for 200 epochs with a batch size of 256 and an initial learning rate of 0.1. The learning rate is decayed with a cosine annealing schedule to 0 which is stepped step-wise. For all models, we always employ the standard transformation of random crop with a padding of 4 and random horizontal flip.

For ImageNet, we follow~\cite{Hendrycks2019Dec} in that we use SGD optimiser with an initial learning rate of $0.1$ and Nesterov momentum of 0.9 and train for 90 epochs. We use a weight decay of $0.0001$ and we do not decay affine parameters of normalisation. The learning rate decays with a by a factor of $0.1$ every $30$ epochs. For all models, we  employ the standard transformation of random resized crop to image size of $224\times224$ with bilinear interpolation and random horizontal flip, before other augmentations.

We choose to train all models from scratch (no fine-tuning using AFA) so that we can study the effects of AFA without other underlying factors. Therefore, for fair comparison, we retrain PRIME from scratch as well using our setup.
For models trained with JSD, we follow~\cite{Wang2021Oct} for the regularising coefficient, mainly: $\lambda=10$ for CIFAR-10 and Tiny ImageNet, $\lambda=1$ for CIFAR-100 and $\lambda=12$ for ImageNet.

We only use the main BN layers during testing, similarly to AugMax for all convolution models.

\paragraph{Compact Convolution Transformer}
For CIFAR-10/100 and ImageNet we also train a transformer architecture. For all datasets we use CutMix (alpha=1.0) and MixUp (alpha=0.2 for ImageNet and alpha=1.0 for CIFAR-10/100) with an equal chance of applying one of the two. For CIFAR-10/100, we follow~\cite{Hassani2021Apr}. We train using the AdamW optimiser with max learning rate of $0.0006$ and weight decay of $0.06$, and we do not decay the affine parameters of the normalisation modules. We train with an effective batch size of 256, and apply learning rate decay following a cosine decay with a warm-up period of 10 epochs and the learning rate scheduler is stepped step-wise. For ImageNet, we use a max learning rate of $0.0005$, effective batch size of 1024 and a weight decay of $0.05$. The learning rate decay follows a cosine annealing schedule with a warm-up of $25$ epochs. The same standard transformations as for convolutional neural networks were applied.

\section{Evaluation metrics}\label{app:expl_metrics}
\paragraph{Mean corruption error (mCE)} measures the robustness of models against image corruptions~\cite{hendrycks2019benchmarking}, computed as:  
\begin{equation}
    \label{equ:1}
    \mathrm{mCE} = \frac{1}{|C|}\sum_{c \in C }\frac{\sum_{s=1}^5 E_{s,c}^f}{\sum_{s=1}^5 E_{s,c}^{baseline}},
\end{equation}
\noindent where the sum of classification error $E$ of five severity $s \in \{1,2,3,4,5\}$ per corruption $c$ of model $f$ is normalized by that of a baseline model. The normalized classification errors of all corruptions $C$ in the dataset are averaged to obtain mCE. We use AlexNet as baseline in ImageNet experiments and ResNet-18 for Tiny ImageNet. For CIFAR-10/100 there are no baselines advised so we do not report the mCE for these datasets.

\paragraph{Mean flip rate (mFR)} evaluates the consistency of model predictions with increasing perturbations~\cite{hendrycks2019benchmarking},  computed as follows:
\begin{equation}
\mathrm{mFR} = \frac{1}{|C|}\sum_{c\in C}\mathrm{FR}_c^f  = \frac{1}{|C|}\sum_{c\in C} \frac{\mathrm{FP}^f_c}{\mathrm{FP}^{baseline}_c},
\end{equation}
with
\begin{equation}
        \mathrm{FP}^f_c = \frac{1}{m(n-1)}\sum_{i=1}^m \sum_{j=2}^n \mathds{1}(f(x_j^{(i)})\neq f(x_{j-1}^{(i)})).
\end{equation}
\noindent 

$\mathds{1}(f(x_j^{(i)})\neq f(x_{j-1}^{(i)}))$ measures whether the prediction of the model $f$  on a frame $x_j$  is the same as its previous perturbed frame in the $i^{th}$ sequence.  If the predictions are the same, $\mathds{1}(f(x_j^{(i)})\neq f(x_{j-1}^{(i)}))$ equals to zero, and thus the performance of the model is not affected by the considered perturbations. $\mathrm{FP}^f_c$ measures the consistency of predictions over $m$  perturbed sequences, each with $n$ of frames. 
For a sequence corrupted by noise, the predictions are compared with those of the first frame, as noise is not temporally related. 
The $\mathrm{mFR}$ is obtained by averaging the normalized $\mathrm{FP}^f_c$ by that of a baseline model across all the perturbations $C$. The value of $\mathrm{mFR}$ is expected to be close to zero for a robust model.

\paragraph{Mean top-5 distance (mT5D)}  also measures the consistency of model predictions in terms of increasing perturbations~\cite{hendrycks2019benchmarking}. For a robust model, the top-5 predictions of frames over a sequence should be relevant to those of the previous frames in the sequence. The top-5 distance thus measures the inconsistency of top-5 predictions under consecutive perturbations, computed as follows:
\begin{equation}
    \mathrm{T5D}_c^{f} = \frac{1}{m(n-1)}\sum_{i=1}^m\sum_{j=2}^n d(\tau(x_j),\tau(x_{j-1})), 
\end{equation}
with
\begin{equation}
    d(\tau(x_j),\tau(x_{j-1})) = \sum_{i=1}^5 \sum_{j=min\{i,\rho(i)\}+1}^{max\{i,\rho(i)\}}\mathds{1}(1 \leq j-1 \leq 5),
\end{equation}
where $\rho(\tau(x_{j})(k)) =  \tau(x_{j-1})(k),$ $\tau(x_j)$ is the ranking of predictions for a perturbed frame $x_j$ and $\tau(x_j)(k)$ indicates the rank of the prediction being $k$. If $\tau(x_j)$ and $\tau(x_{j-1})$ are the same, then $d(\tau(x_j),\tau(x_{j-1})) = 0$.
Averaging the normalized $\mathrm{T5D}$  by that of the baseline over all corruptions  obtain  $\mathrm{mT5D} =  \frac{1}{|C|}\sum_{c\in C} \frac{\mathrm{T5D}_c^{f}}{\mathrm{T5D}_c^{baseline}}$.

\paragraph{Fourier heatmap} evaluates model robustness from a Fourier perspective~\cite{Yin2019Jun} exploiting Fourier basis functions to perturb test images and measuring the classification error of models. They are constructed as follows.
Let $U_{i,j} \in \mathds{R}^{d_1\times d_2}$ be a real-valued matrix such that its norm equals to 1. The Fourier transform of $U_{i,j}$  has only two non-zero elements located at $(i,j)$ and the corresponding symmetric coordinate with respect to the image center. Given an image $X$, a perturbed image with Fourier basis noise can be generated by $\Tilde{X}_{i,j}=X+rvU_{i,j}$, where $r$ is chosen randomly from a uniform distribution ranging from -1 to 1, and $v$ controls the strength of the added noise. Each channel of the images is perturbed independently with different $r$ and $v$. The model robustness against Fourier basis noise $U_{i,j}$ is evaluated by the classification error, and the final outcome is in a form of heatmap which records the error of the evaluated model under different Fourier basis noise. Examples are in~\cref{fig:cctheatmap}.

\section{Supplementary results}

\subsection{Results on Tiny ImageNet}\label{app:tin_results}

In Tab.~\ref{tab:tin_exps} we provide the robustness results on Tiny ImageNet (TIN), which are consistent with those presented on other datasets.
Models trained with AFA show robustness improvements consistently by significant margin with only negligible reduction of the clean accuracy.
We again see that JSD improves robustness slightly, and in AugMix it improves clean accuracy greatly.
\begin{table}[!t]
	\centering
	\footnotesize
	\begin{tabular}{cccccc}
		&        &        &          & \multicolumn{2}{c}{\bfseries TIN-C} \\
		\bfseries - & \bfseries Main   & \bfseries Auxiliary & \bfseries SA$\uparrow$ & \bfseries RA$\uparrow$ & \bfseries mCE$\downarrow$ \\ \toprule
		\multirow{13}{*}{\rotatebox{90}{ResNet18}}
		            & -                & \xmark              & 63.56                  & 25.86                  & 97.34                     \\
		            & AFA              & \xmark              & 59.04                  & 28.87                  & 93.45                     \\
		            & -                & AFA                 & \rn{62.52}             & \gn{33.35}             & \gn{87.58}                \\
		\cmidrule{2-6}
		            & AugMix           & \xmark              & 62.95                  & 36.26                  & 84.05                     \\
		            & AugMix           & AFA                 & \rn{62.51}             & \gn{38.67}             & \gn{80.83}                \\
		            & AugMix$^\dagger$ & \xmark              & \textbf{64.65}                  & 36.30                  & 83.90                     \\
		            % & AugMix$^\dagger$ & AugMax              & 62.21                  & 38.67                  & 80.72                     \\
		            & AugMix$^\dagger$ & AFA                 & \rn{64.34}             & \gn{38.52}             & \gn{80.79}                \\
		\cmidrule{2-6}
		            & PRIME            & \xmark              & 63.07                  & 39.67                  & 79.42                     \\
		            & PRIME            & AFA                 & \rn{62.48}             & \gn{41.09}             & \gn{77.55}                \\
		            & PRIME$^\dagger$  & \xmark              & 63.24                  & 41.22                  & 77.44                     \\
		            & PRIME$^\dagger$  & AFA                 & \rn{62.65}             & \textbf{\gn{43.00}}             & \textbf{\gn{73.11}}                \\
		\bottomrule
	\end{tabular}
	\caption{Results for TIN-C with ResNet18. Models with $^\dagger$ use loss with JSD.}
	\label{tab:tin_exps}
\end{table}

\begin{figure}[!t]
    \centering
    \input{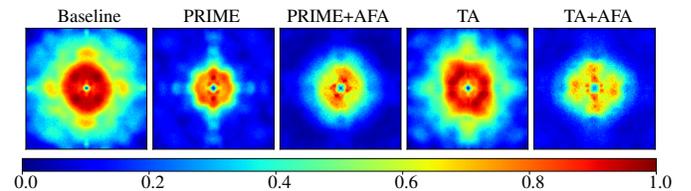}
    \caption{Fourier heatmaps of CCT trained with standard setting, PRIME, PRIME+AFA, TA and TA+AFA.}
    \label{fig:cctheatmap}
\end{figure}

\begin{figure*}[!t]
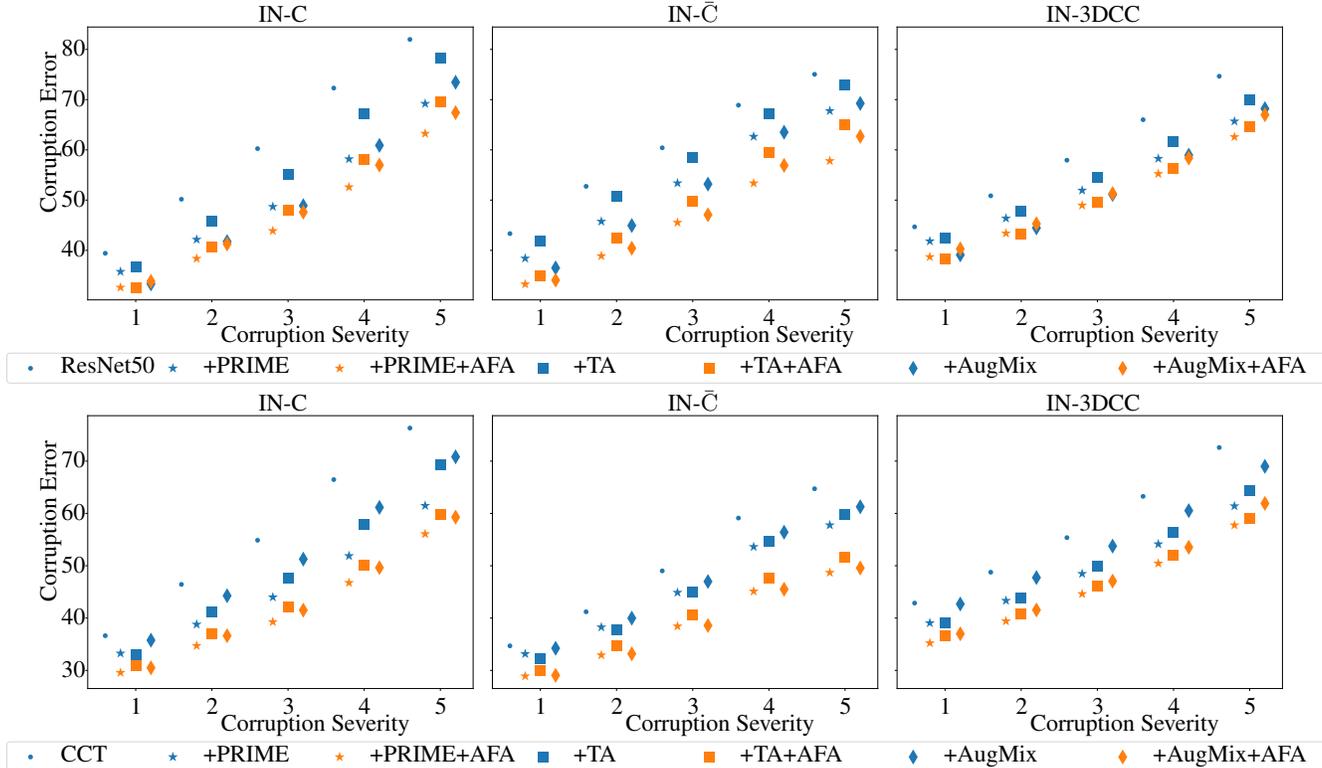

    \centering
    \input{section/figs/plot_benchmark_rn50}
    
    \input{section/figs/plot_benchmark_cct}
    \caption{Corruption error of ResNet50 and CCT trained with PRIME, PRIME+AFA, TA, TA+AFA, AugMix and AugMix+AFA. Models trained with AFA (orange points) have lower error at each severity than their counterpart trained with only visual augmentation (blue points), demonstrating the benefit of AFA to corruption robustness. }
    \label{fig:resultscorruptionseverity}
\end{figure*}

\subsection{Robustness in the frequency spectrum.}\label{app:heatmap_cct}
The Fourier heatmaps of CCT trained with standard setting, PRIME, PRIME+AFA, TA and TA+AFA are provided in~\cref{fig:cctheatmap}.
Our observations are consistent with those in the main paper. Also CCT models trained with the contribution of AFA have better robustness to low and middle-high frequency corruptions. 

\subsection{Robustness per corruption severity.}\label{app:per_severity}

We report the classification error of models tested under corruptions with different severity levels~\cref{fig:resultscorruptionseverity}. The models trained with AFA have consistently lower error than their counterpart trained without AFA, showing that AFA can further boost the robustness of models against common image corruptions, especially in difficult testing conditions with high severity.
 
\subsection{Robustness to each image corruption.}
\begin{figure*}
    \centering
    \input{section/figs/corrutpions/blur_noise}
    \input{section/figs/corrutpions/brown_noise}
    \input{section/figs/corrutpions/gaussian_noise}
    
    \vspace{-1.9em}
    \input{section/figs/corrutpions/impulse_noise}
    \input{section/figs/corrutpions/iso_noise}
    \input{section/figs/corrutpions/perlin_noise}

    \vspace{-1.9em}
    \input{section/figs/corrutpions/plasma_noise}
    \input{section/figs/corrutpions/shot_noise}    
    \input{section/figs/corrutpions/single_frequency_noise}

    \vspace{-1.9em}
    \input{section/figs/corrutpions/sine_waves}
    \input{section/figs/corrutpions/defocus}
    \input{section/figs/corrutpions/far_focus}

    \vspace{-1.9em}
    \input{section/figs/corrutpions/glass_blur}
    \input{section/figs/corrutpions/motion_blur}
    \input{section/figs/corrutpions/near_focus}

    \vspace{-1.9em}
     \input{section/figs/corrutpions/xy_motion}   
    \input{section/figs/corrutpions/z_motion}
    \input{section/figs/corrutpions/zoom_blur}

    \vspace{2.8em}
    \caption{Averaged classification error per corruption of ResNet50s (orange) and CCTs (green). The error points of  model trained with visual augmentations and additionally with AFA are connected. A decreasing line indicates better performance when trained additionally with AFA (a). }
    \label{fig:corruptions}
\end{figure*}

\begin{figure*}
    \centering
    \input{section/figs/corrutpions/sparkles}
    \input{section/figs/corrutpions/brightness}
    \input{section/figs/corrutpions/fog_3d}

    \vspace{-1.9em}
    \input{section/figs/corrutpions/fog}
    \input{section/figs/corrutpions/frost}
    \input{section/figs/corrutpions/low_light}

    \vspace{-1.9em}
   \input{section/figs/corrutpions/snow}
   \input{section/figs/corrutpions/biterror}
    \input{section/figs/corrutpions/caustic_refraction}
    
    \vspace{-1.9em}
    \input{section/figs/corrutpions/checkboard}
    \input{section/figs/corrutpions/color_quant}
    \input{section/figs/corrutpions/contrast}   

    \vspace{-1.9em}
    \input{section/figs/corrutpions/elastic_transform}
    \input{section/figs/corrutpions/flash}
    \input{section/figs/corrutpions/h265_abr}
    
    \vspace{-1.9em}
    \input{section/figs/corrutpions/inverse_sparkles}
    \input{section/figs/corrutpions/jpeg}
    \input{section/figs/corrutpions/pixelate}

    \vspace{2.8em}
    \caption{Averaged classification error per corruption of ResNet50s (orange) and CCTs (green). The error points of  model trained with visual augmentations and additionally with AFA are connected. A decreasing line indicates better performance when models are trained additionally with AFA (b). }
    \label{fig:corruptions}
\end{figure*}

Furthermore, we show the classification error averaged over five corruption severity levels per corruption type in~\cref{fig:corruptions}. The error points of model trained with visual augmentations only, and with further use of AFA are connected by a line. A downward trend means models trained with AFA have better robustness performance on specific corruption types. 
We observe that, in general, models with AFA have better corruption robustness than models trained only with visual augmentations. Significant improvements are especially evident on noise corruptions (Gaussian noise, impulse noise, iso noise, plasma noise, shot noise, single frequency grayscale noise and cocentric sine waves). One exception is ResNet50 trained with AugMix and AFA, for which the model trained without AFA performs better except on few cases. This can be attributed to the less training time (90 epoch vs 180 epochs) than that of ResNet50+AugMix.

\section{Evidence of adversarial nature of AFA}\label{app:adv_evidence}

\subsection{Main and auxiliary batch normalisation}\label{app:bn_diff}

For the ResNet architecture, which includes Batch Normalisation layers, we had replaced the Batch Normalisation layers with DuBIN layers~\cite{Wang2021Oct} while operating the Auxiliary setting. Assuming that there is no difference in the distribution of images augmented using AFA and a typical visual augmentation technique, there should be no difference in the affine parameters learnt for each individual batch normalisation parameter (the main and the auxiliary). 

We show in Fig.~\ref{fig:main_aux_diff} the Mean Absolute Difference of the same parameter between the main and the auxiliary component of the DuBIN layer at different depths of the model. We show the results for for models trained with ACE loss for ResNet-50 where AFA is paired with just standard transforms, AugMix, PRIME and Trivial Augment (TA).

We can see that at earlier depths the parameter differ largely, which is explained by the difference in distribution of a visually augmented and AFA augmented image. This difference converges to a lower value, which is again explained by the model attempting to extract similar features from the differently augmented images.

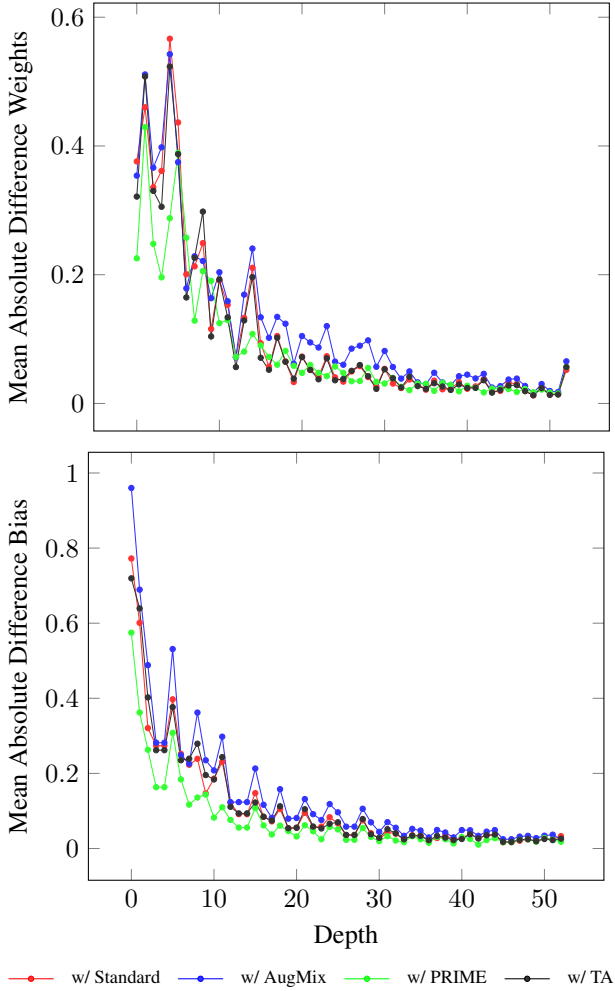
\begin{figure}[!t]
\centering
\begin{subfigure}[t]{\linewidth}

\begin{tikzpicture}
\pgfplotstableread[col sep=comma]{section/data/batchnorm_diffs.csv}\datatable

\begin{axis}[
% xtick style={draw=none},
ylabel={Mean Absolute Difference Weights},
xticklabels={,,}
]
 \addplot[red!80!white,mark=*,mark options={scale=0.5}] table [x index=0, y index=3] {\datatable};
 % \addlegendentry{None};

 \addplot[blue!80!white,mark=*,mark options={scale=0.5}] table [x index=0, y index=1] {\datatable};
 % \addlegendentry{Augmix};

 \addplot[green!80!white,mark=*,mark options={scale=0.5}] table [x index=0, y index=5] {\datatable};
 % \addlegendentry{Prime};

 \addplot[black!80!white,mark=*,mark options={scale=0.5}] table [x index=0, y index=7] {\datatable};
 % \addlegendentry{TA};
\end{axis}
\end{tikzpicture}

\begin{tikzpicture}
\pgfplotstableread[col sep=comma]{section/data/batchnorm_diffs.csv}\datatable

\begin{axis}[
ylabel={Mean Absolute Difference Bias},
xlabel={Depth},
]
 \addplot[red!80!white,mark=*,mark options={scale=0.5}] table [x index=0, y index=4] {\datatable};
 % \addlegendentry{None};

 \addplot[blue!80!white,mark=*,mark options={scale=0.5}] table [x index=0, y index=2] {\datatable};
 % \addlegendentry{Augmix};

 \addplot[green!80!white,mark=*,mark options={scale=0.5}] table [x index=0, y index=6] {\datatable};
 % \addlegendentry{Prime};

 \addplot[black!80!white,mark=*,mark options={scale=0.5}] table [x index=0, y index=8] {\datatable};
 % \addlegendentry{TA};
\end{axis}
\end{tikzpicture}
\end{subfigure}

\begin{subfigure}[b]{\linewidth}
    \centering
    
    \begin{tikzpicture}
        \begin{customlegend}[legend columns=-1,legend style={draw=none,column sep=1ex},legend entries={\scriptsize w/ Standard, \scriptsize w/ AugMix, \scriptsize w/ PRIME, \scriptsize w/ TA}]
        \addlegendimage{red,fill=red!80!white,mark=*,mark options={scale=0.5}, sharp plot}
        \addlegendimage{blue,fill=blue!80!white,mark=*,mark options={scale=0.5}, sharp plot}
        \addlegendimage{green,fill=green!80!white,mark=*,mark options={scale=0.5}, sharp plot}
        \addlegendimage{black,fill=black!80!white,mark=*,mark options={scale=0.5}, sharp plot}
        \end{customlegend}
    \end{tikzpicture}
\end{subfigure}
    \vspace{-2em}
    \caption{Comparison of the mean absolute difference of the learnt affine parameters for the two batch normalisations in the Dual Batch Norm Layers of ResNet50-DuBIN architecture at different depths.}
    \label{fig:main_aux_diff}
\end{figure}

\subsection{Embedding Space Visualization}\label{app:pca_vis}
We compare how diverse are the augmentations of AFA are with respect to other methods. 
We follow the procedure in ~\cite{Modas2021Dec}. 
To reiterate the procedure, we randomly select 3 images from ImageNet, each one belonging to a different class. 
For each image, we generate 100 transformed instances using Standard Transform, Trivial Augment, PRIME, PGD attack with the following parameters: 5 steps, epsilon of $8/255$ and alpha of $2/255$, and with AFA.
Then, we pass the transformed instances of each method through a ResNet-50 pre-trained on ImageNet using standard transform and training setup, and extract the features of its embedding space from the penultimate layer before the dense layer.
On the features extracted for each method, we perform
PCA after whitening and then visualize the projection of the features onto the first two principal components. 
We visualize the projected augmented space in Fig.~\ref{fig:emb_vis}, which
demonstrates that AFA generates which are more akin to an adversarial attack rather than a standard augmentation. This is clear from a visual similarity of AFA's result in Fig.~\ref{subfig:emb_vis_afa} to PGD's result in Fig.~\ref{subfig:emb_vis_pgd} and dissimilarity to the other Visual Augmentation techniques.

Finally, we also add in Fig.~\ref{subfig:emb_vis_afa_with_aux} the embedding space visualisation for the Auxiliary Trained model with AFA augmentation and standard transform for main, following the same procedure as above. We see that the model learns more separable embeddings for images augmented with AFA using the auxiliary setting, therefore is less sensitive to Frequency perturbation. The embeddings also retain a large variance and hardness, therefore showcasing the diversity of the augmentations of AFA.

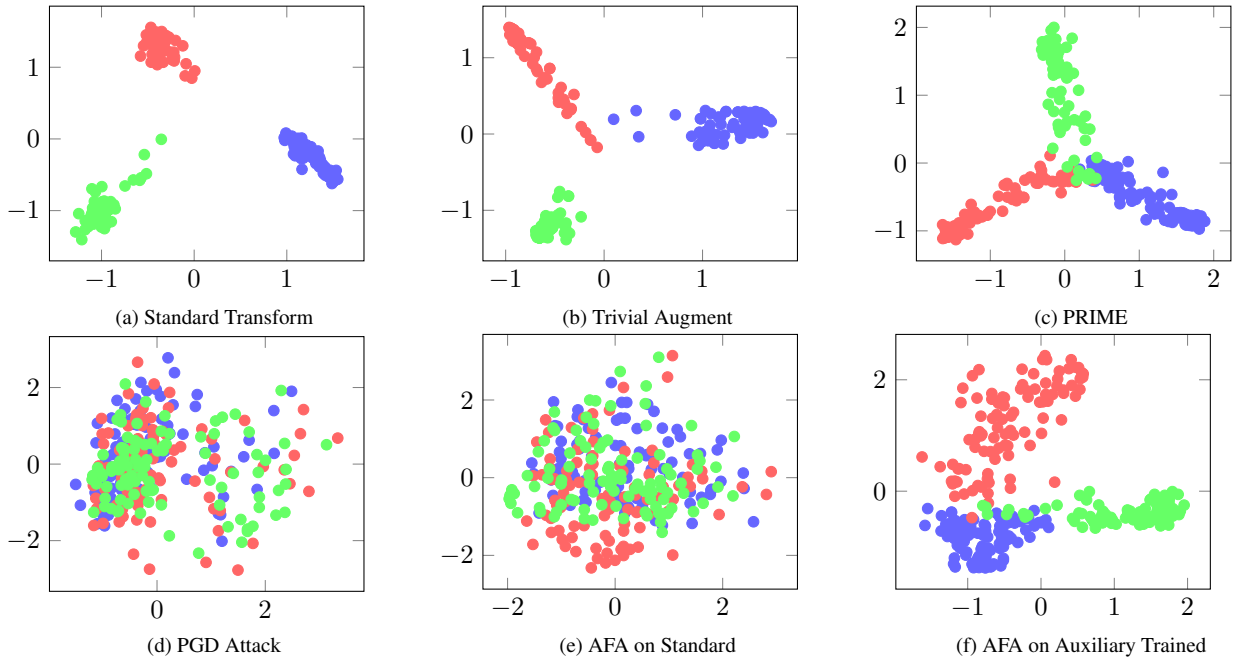
\begin{figure*}[!t]
    \centering
    \begin{subfigure}[b]{0.33\linewidth}
\begin{tikzpicture}
    \pgfplotstableread[col sep=comma]{section/data/pca_components_std.csv}\datatable
\begin{axis}[width=\linewidth]
    \addplot[
        scatter/classes={0={blue!60!white}, 1={red!60!white}, 2={green!60!white}},
        scatter, mark=*, only marks, 
        scatter src=explicit symbolic,
        % nodes near coords*={\Label},
        visualization depends on={value \thisrow{label} \as \Label} %<- added value
    ] table [x index = 1, y index = 2, meta=label] {\datatable};
\end{axis}
\end{tikzpicture}
\caption{Standard Transform}
    \end{subfigure}%
    \begin{subfigure}[b]{0.33\linewidth}
\begin{tikzpicture}
    \pgfplotstableread[col sep=comma]{section/data/pca_components_ta.csv}\datatable
\begin{axis}[width=\linewidth]
    \addplot[
        scatter/classes={0={blue!60!white}, 1={red!60!white}, 2={green!60!white}},
        scatter, mark=*, only marks, 
        scatter src=explicit symbolic,
        % nodes near coords*={\Label},
        visualization depends on={value \thisrow{label} \as \Label} %<- added value
    ] table [x index = 1, y index = 2, meta=label] {\datatable};
\end{axis}
\end{tikzpicture}
\caption{Trivial Augment}
    \end{subfigure}%
    \begin{subfigure}[b]{0.33\linewidth}
\begin{tikzpicture}
    \pgfplotstableread[col sep=comma]{section/data/pca_components_prime.csv}\datatable
\begin{axis}[width=\linewidth]
    \addplot[
        scatter/classes={0={blue!60!white}, 1={red!60!white}, 2={green!60!white}},
        scatter, mark=*, only marks, 
        scatter src=explicit symbolic,
        % nodes near coords*={\Label},
        visualization depends on={value \thisrow{label} \as \Label} %<- added value
    ] table [x index = 1, y index = 2, meta=label] {\datatable};
\end{axis}
\end{tikzpicture}
\caption{PRIME}
    \end{subfigure}

    \begin{subfigure}[b]{0.33\linewidth}
\begin{tikzpicture}
    \pgfplotstableread[col sep=comma]{section/data/pca_components_pgd.csv}\datatable
\begin{axis}[width=\linewidth]
    \addplot[
        scatter/classes={0={blue!60!white}, 1={red!60!white}, 2={green!60!white}},
        scatter, mark=*, only marks, 
        scatter src=explicit symbolic,
        % nodes near coords*={\Label},
        visualization depends on={value \thisrow{label} \as \Label} %<- added value
    ] table [x index = 1, y index = 2, meta=label] {\datatable};
\end{axis}
\end{tikzpicture}
\caption{PGD Attack}
\label{subfig:emb_vis_pgd}
    \end{subfigure}%
    \begin{subfigure}[b]{0.33\linewidth}
\begin{tikzpicture}
    \pgfplotstableread[col sep=comma]{section/data/pca_components_fourier.csv}\datatable
\begin{axis}[width=\linewidth]
    \addplot[
        scatter/classes={0={blue!60!white}, 1={red!60!white}, 2={green!60!white}},
        scatter, mark=*, only marks, 
        scatter src=explicit symbolic,
        % nodes near coords*={\Label},
        visualization depends on={value \thisrow{label} \as \Label} %<- added value
    ] table [x index = 1, y index = 2, meta=label] {\datatable};
\end{axis}
\end{tikzpicture}
\caption{AFA on Standard}
\label{subfig:emb_vis_afa}
    \end{subfigure}%
    \begin{subfigure}[b]{0.33\linewidth}
\begin{tikzpicture}
    \pgfplotstableread[col sep=comma]{section/data/afa_pca_components_afa.csv}\datatable
\begin{axis}[width=\linewidth]
    \addplot[
        scatter/classes={0={blue!60!white}, 1={red!60!white}, 2={green!60!white}},
        scatter, mark=*, only marks, 
        scatter src=explicit symbolic,
        % nodes near coords*={\Label},
        visualization depends on={value \thisrow{label} \as \Label} %<- added value
    ] table [x index = 1, y index = 2, meta=label] {\datatable};
\end{axis}
\end{tikzpicture}
\caption{AFA on Auxiliary Trained}
\label{subfig:emb_vis_afa_with_aux}
    \end{subfigure}%
    \caption{Differences in the Embedding Space for Different Methods and PGD Attack. From (a)-(e) the standardly trained model is used, and for (f) the model trained in the auxiliary setting is used.}
    \label{fig:emb_vis}
\end{figure*}

\section{Regularisation Effect}\label{app:reg_effect}
In Fig.~\ref{fig:reg_effect} we show the norm of the weights of the convolutional kernels for the ResNet50 models trained with and without AFA at each depth. We see that AFA provides a strong regularisation effect that is akin to the regularisation effect of PRIME. Meanwhile, we see that AugMix does not regularise the weights at all compared to the baseline model with only the standard transforms. The weights are however regularised to when AFA is paired with AugMix. Combined with PRIME, there does not seem to be further regularisation of the weights.

\begin{figure*}[!t]
\centering
\begin{subfigure}[b]{\linewidth}
    \begin{tikzpicture}
        \pgfplotstableread[col sep=comma]{section/data/conv2d_norms.csv}\datatable
\begin{axis}[
ylabel={Weight Norm of Conv2d Layers},
xlabel={Depth},
width=\linewidth,
height=7cm
]
 \addplot[red!80!white,mark=*,mark options={scale=0.5}] table [x index=0, y index=1] {\datatable};
 % \addlegendentry{None};

 \addplot[blue!80!white,mark=*,mark options={scale=0.5}] table [x index=0, y index=2] {\datatable};
 % \addlegendentry{Augmix};

 \addplot[green!80!white,mark=*,mark options={scale=0.5}] table [x index=0, y index=3] {\datatable};
 % \addlegendentry{Prime};

 \addplot[black!80!white,mark=*,mark options={scale=0.5}] table [x index=0, y index=4] {\datatable};

  \addplot[yellow!80!white,mark=*,mark options={scale=0.5}] table [x index=0, y index=5] {\datatable};

   \addplot[brown!80!white,mark=*,mark options={scale=0.5}] table [x index=0, y index=6] {\datatable};
 % \addlegendentry{TA};
\end{axis}
    \end{tikzpicture}
\end{subfigure}%

\begin{subfigure}[b]{\linewidth}
    \centering
    
    \begin{tikzpicture}
        \begin{customlegend}[legend columns=-1,legend style={draw=none,column sep=1ex},legend entries={\scriptsize Standard + AFA, \scriptsize Standard, \scriptsize PRIME + AFA, \scriptsize PRIME, \scriptsize AugMix + AFA, \scriptsize AugMix}]
        \addlegendimage{red,fill=red!80!white,mark=*,mark options={scale=0.5}, sharp plot}
        \addlegendimage{blue,fill=blue!80!white,mark=*,mark options={scale=0.5}, sharp plot}
        \addlegendimage{green,fill=green!80!white,mark=*,mark options={scale=0.5}, sharp plot}
        \addlegendimage{black,fill=black!80!white,mark=*,mark options={scale=0.5}, sharp plot}
        \addlegendimage{yellow,fill=yellow!80!white,mark=*,mark options={scale=0.5}, sharp plot}
        \addlegendimage{brown,fill=brown!80!white,mark=*,mark options={scale=0.5}, sharp plot}
        \end{customlegend}
    \end{tikzpicture}
\end{subfigure}
\caption{The norm of the Conv2d Layers for ResNet 50 trained with different augmentation techniques with and without AFA. The plot highlights the regularisation effect the methods have on the model weights.}
\label{fig:reg_effect}
\end{figure*}
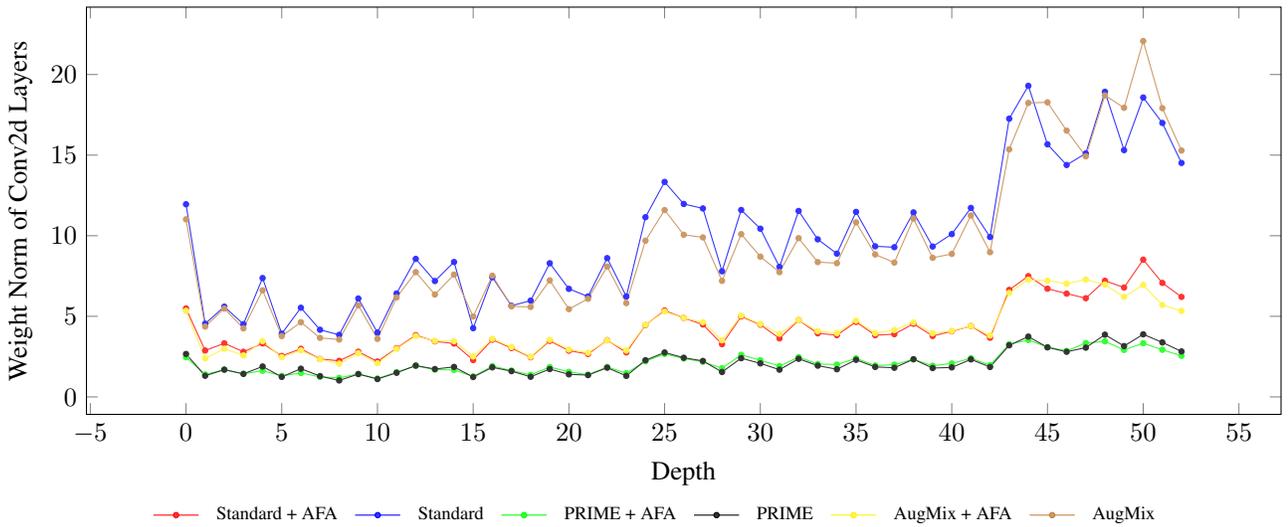

\section{Proof of Augmenting Fourier Domain}\label{app:proof}

\newtheorem{lemma}{Lemma}
\newtheorem{theorem}{Theorem}

\begin{lemma}[Linearity]\label{lem:linearity}
Let $f$, $g$ be functions of a real variable and let $\mathscr{F}(f)$ and $\mathscr{F}(g)$ be their Fourier transforms. Then for complex numbers $a$ and $b$
\begin{equation}
    \mathscr{F}(af+bg)=a\mathscr{F}(f)+b\mathscr{F}(g),
\end{equation}
therefore, Fourier transform $\mathscr{F}$ is a linear transformation.
\end{lemma}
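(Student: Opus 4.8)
The plan is to proceed directly from the integral definition of the one‑dimensional Fourier transform and then invoke the linearity of integration. Writing $\mathscr{F}(f)(\xi) = \int_{-\infty}^{\infty} f(x)\, e^{-2\pi i x \xi}\, dx$, I would first substitute the linear combination $af+bg$ into this definition, obtaining a single integral whose integrand is $\bigl(af(x)+bg(x)\bigr)e^{-2\pi i x\xi}$. Everything that follows is a pointwise identity in the frequency variable $\xi$, so the argument reduces to manipulating one integral.

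The key steps, in order, are: (i) distribute the exponential factor across the sum inside the integrand; (ii) split the integral of the sum into a sum of two integrals; and (iii) pull the scalar constants $a$ and $b$ outside their respective integrals. Concretely,
\begin{equation}
\mathscr{F}(af+bg)(\xi) = \int_{-\infty}^{\infty}\bigl(af(x)+bg(x)\bigr)e^{-2\pi i x\xi}\,dx = a\int_{-\infty}^{\infty} f(x)e^{-2\pi i x\xi}\,dx + b\int_{-\infty}^{\infty} g(x)e^{-2\pi i x\xi}\,dx.
\end{equation}
Recognising the two remaining integrals as $\mathscr{F}(f)(\xi)$ and $\mathscr{F}(g)(\xi)$ respectively then yields $\mathscr{F}(af+bg)=a\mathscr{F}(f)+b\mathscr{F}(g)$, which is exactly the claimed identity and shows that $\mathscr{F}$ is a linear operator.

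Since every manipulation relies only on elementary properties of the integral, there is no genuinely hard step; the proof is essentially immediate. The one technical point I would flag is that splitting $\int(af+bg)$ into $a\int f + b\int g$ presupposes that the two transforms are individually well defined, which I would guarantee by taking $f,g \in L^1(\mathbb{R})$ as a standing assumption (this also ensures the integrand is absolutely integrable so the split is legitimate). Finally, I would note that the same argument carries over verbatim to the two‑dimensional transform relevant for images, replacing the scalar integral by an integral over $\mathbb{R}^2$, since this is the form actually used in the subsequent argument.
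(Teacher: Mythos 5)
Your proof is correct: it is the standard argument from the integral definition, using linearity of the integral, with the appropriate $L^1$ caveat and the remark that it extends verbatim to the 2D transform used for images. The paper itself states this lemma without proof (treating it as a standard fact), and the argument it implicitly relies on is exactly the one you give, so there is nothing to reconcile.
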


\begin{lemma}[Fourier Transform of Plane Wave]
\label{lem:dirac}
The Fourier transform of the planar wave given by the frequency $f$ and the direction $\omega$, $A_{f, \omega}$ has a fourier transform
\begin{align}
    \mathscr{F}(A_{f, \omega}) 
    &= \mathscr{F}\left( R\cos(2\pi f(u\cos(\omega) + v\sin(\omega))) \right) \\
    &= \frac{R}{2} \left( \delta(\hat{x}, \hat{y}) + \delta(\bar{x}, \bar{y}) \right),
\end{align}
where, $\hat{x} = x - f \cos(\omega)$, $\hat{y} = y - f \sin(\omega)$ and $\bar{x} = x + f \cos(\omega)$, $\bar{y} = y + f \sin(\omega)$.
\end{lemma}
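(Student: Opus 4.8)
The plan is to reduce the computation to the elementary fact that the Fourier transform of a pure complex exponential is a Dirac delta centered at the corresponding frequency, and then to recombine the two exponentials that make up a real cosine. First I would apply Euler's formula to split the planar wave (taken in the cosine form of the lemma statement) into a sum of two conjugate exponentials:
\begin{equation}
    A_{f,\omega}(u,v) = \frac{R}{2}\left( e^{\,i 2\pi f (u\cos\omega + v\sin\omega)} + e^{-i 2\pi f (u\cos\omega + v\sin\omega)} \right).
\end{equation}
Each exponential is a plane wave $e^{\pm i 2\pi (u\,\xi_0 + v\,\eta_0)}$ with spatial-frequency vector $(\xi_0,\eta_0) = (f\cos\omega,\, f\sin\omega)$.

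Next I would invoke Lemma~\ref{lem:linearity}: since $\mathscr{F}$ is linear, it suffices to transform each exponential separately and add the results weighted by $R/2$. For this I would use the standard (distributional) identity
\begin{equation}
    \mathscr{F}\!\left( e^{\,i 2\pi (u\,\xi_0 + v\,\eta_0)} \right)(x,y) = \delta(x - \xi_0,\, y - \eta_0),
\end{equation}
which follows directly from the defining integral together with the representation $\int e^{\,i 2\pi (\xi_0 - x)u}\,du = \delta(x-\xi_0)$. Applying this to the first term places a delta at $(x,y) = (\xi_0,\eta_0)$, i.e.\ where $\hat{x}=\hat{y}=0$, and to the conjugate term places a delta at $(x,y) = (-\xi_0,-\eta_0)$, i.e.\ where $\bar{x}=\bar{y}=0$. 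Summing yields exactly $\tfrac{R}{2}\bigl( \delta(\hat{x},\hat{y}) + \delta(\bar{x},\bar{y}) \bigr)$, the claimed expression.

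The main obstacle here is rigor rather than computation: the plane-wave exponentials are neither integrable nor square-integrable, so the transform must be interpreted in the sense of tempered distributions, and the delta identity above is really a statement in $\mathcal{S}'$. I would address this by fixing a single Fourier convention throughout (the unitary, $2\pi$-in-the-exponent convention) and either citing the standard distributional transform of $e^{i 2\pi\langle\cdot\rangle}$ or deriving it by testing against Schwartz functions. A secondary bookkeeping point is to keep the sign conventions consistent so that the two deltas land at the symmetric pair $\pm(f\cos\omega,\, f\sin\omega)$ encoded by $(\hat{x},\hat{y})$ and $(\bar{x},\bar{y})$; once the convention is pinned down this alignment is automatic, and it is precisely this pair of spikes that justifies the paper's earlier claim that additive Fourier-basis noise perturbs the spectrum at a single, precisely targeted frequency.
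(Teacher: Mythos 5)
Your proof is correct and is, in effect, the paper's own argument: the appendix states this lemma without proof (only the theorem is proved, by invoking it together with Lemma~\ref{lem:linearity}), and your route --- Euler's formula to split the cosine into conjugate exponentials, linearity, and the tempered-distribution identity $\mathscr{F}\bigl(e^{\,i2\pi(u\xi_0+v\eta_0)}\bigr)(x,y)=\delta(x-\xi_0,\,y-\eta_0)$ --- is exactly the standard computation the paper takes for granted, with your remark that everything must be read in $\mathcal{S}'$ being the right way to make it rigorous. The one discrepancy is internal to the paper rather than a gap in your argument: \cref{eqn:general_wave} defines $A_{f,\omega}$ as a phase-shifted sine rather than the cosine used in the lemma, and redoing your computation for that form merely attaches unimodular complex phase factors to the two deltas without moving them, so the lemma's localization claim and the downstream theorem are unaffected.
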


\begin{theorem}[AFA Augments the Fourier Domain]
Given an image sample $s$, an augmentation using AFA produces as augmentation in the Fourier domain of the image for one specific frequency and orientation of the wave $(f, \omega)$.
\end{theorem}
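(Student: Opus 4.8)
The plan is to reduce the statement to a direct application of the two preceding lemmas, treating the AFA update one colour channel at a time and setting aside the clamp for the main argument. First I would fix a single channel and write the (pre-clamp) augmented sample as $s^a = s + \sigma A_{f,\omega}$, where $\sigma$ is the sampled strength and $A_{f,\omega}$ is the planar wave of \eqref{eqn:general_wave}. Since the Fourier transform $\mathscr{F}$ is linear by Lemma~\ref{lem:linearity}, I would split the transform of the augmented sample as
\begin{equation}
    \mathscr{F}(s^a) = \mathscr{F}(s) + \sigma\,\mathscr{F}(A_{f,\omega}).
\end{equation}

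Next I would invoke Lemma~\ref{lem:dirac} to replace $\mathscr{F}(A_{f,\omega})$ by $\frac{R}{2}\bigl(\delta(\hat{x},\hat{y}) + \delta(\bar{x},\bar{y})\bigr)$, whose two Dirac masses sit exactly at $(f\cos\omega,f\sin\omega)$ and its centrally symmetric partner. The crux of the argument is then to read off the spectral difference
\begin{equation}
    \mathscr{F}(s^a) - \mathscr{F}(s) = \frac{\sigma R}{2}\bigl(\delta(\hat{x},\hat{y}) + \delta(\bar{x},\bar{y})\bigr),
\end{equation}
which is supported only at the single frequency--orientation pair $(f,\omega)$, together with the conjugate-symmetric partner forced by the realness of the image. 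This localisation is precisely the content of the theorem. I would close by remarking that AFA draws an independent triple $(f_c,\omega_c,\sigma_c)$ per channel and that $\mathscr{F}$ acts channel-wise, so the same conclusion holds simultaneously for the R, G and B channels, each at its own chosen $(f_c,\omega_c)$.

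The step I expect to be the main obstacle is the nonlinearity of the $\text{Clamp}_{[0,1]}$ operation in \eqref{eqn:afa}, which formally invalidates the clean linearity split: $\mathscr{F}$ of a saturated signal is not the sum of the individual transforms, and saturation can in principle leak energy across many frequencies. I would handle this by phrasing the theorem for the additive perturbation and then arguing that the clamp is active only on the small set of pixels driven outside $[0,1]$, so it contributes a bounded correction rather than redistributing the spectrum away from $(f,\omega)$. A smaller technical gap to close is that \eqref{eqn:general_wave} uses a phase-shifted sine while Lemma~\ref{lem:dirac} is stated for a pure cosine; since any phase shift only multiplies the two spectral deltas by unit-modulus factors $e^{\pm i\phi}$ and leaves their support at $\pm(f\cos\omega,f\sin\omega)$ unchanged, the localisation conclusion carries over verbatim, and I would make this reconciliation explicit before concluding.
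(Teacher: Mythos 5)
Your argument is exactly the paper's own proof: write the augmented sample as $s + \sigma A_{f,\omega}$, apply Lemma~\ref{lem:linearity} to split the transform, and apply Lemma~\ref{lem:dirac} to localise the added spectrum at the two Dirac masses $\pm(f\cos\omega, f\sin\omega)$, channel by channel. If anything you are more careful than the paper, whose proof silently ignores both the $\mathrm{Clamp}_{[0,1]}$ in Eq.~\eqref{eqn:afa} and the phase-shifted-sine versus pure-cosine mismatch between Eq.~\eqref{eqn:general_wave} and the statement of Lemma~\ref{lem:dirac}; your resolutions of those two points are sound.
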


\begin{proof}
Given image $s$ and the randomly sampled planar wave using AFA, $\sigma A_{f, \omega}$, dropping the subscript for the channels for clarity, we have:
\begin{align}
    \mathscr{F}(\text{AFA}(s)) 
    &= \mathscr{F}(s + \sigma A_{f, \omega}) \nonumber \\
    &= \mathscr{F}(s) + \sigma \mathscr{F}(A_{f, \omega}) \\
    & \text{(using Lemma~\ref{lem:linearity})} \nonumber \\
    &= \mathscr{F}(s) + \frac{\sigma R}{2} \left( \delta(\hat{x}, \hat{y}) + \delta(\bar{x}, \bar{y}) \right). \\
    & \text{(using Lemma~\ref{lem:dirac})} \nonumber
\end{align}
Therefore, we prove augmenting an image $s$ with AFA corresponds to augmenting the amplitude of a specific frequency component ($f$,$\omega$) in the 2D Fourier transform of the image.
\end{proof}

\end{document}